\documentclass{article}
\usepackage{iclr2027_conference,times}

\usepackage[T1]{fontenc}
\usepackage[utf8]{inputenc}

\usepackage{amsmath,amssymb,amsfonts}
\usepackage{amsthm}
\usepackage{bm}
\usepackage{algorithm}
\usepackage{algpseudocode}

\newtheorem{lemma}{Lemma}
\newtheorem{theorem}{Theorem}
\newtheorem{corollary}{Corollary}

\usepackage{graphicx}
\usepackage{booktabs}
\usepackage{multirow}
\usepackage{array}
\usepackage{xcolor}

\usepackage{microtype}
\usepackage[hidelinks]{hyperref}
\usepackage{url}
\hypersetup{pdftitle={Where Privacy Belongs: Placement Diagnosis and Certified Selection for Private Counterfactual Explanations on Graphs},pdfauthor={Yuxiang Yao, Zijun Zhao}}

\newcommand{\method}{\textsc{PrivCFS}}
\newcommand{\G}{\mathcal{G}}
\newcommand{\V}{\mathcal{V}}
\newcommand{\E}{\mathcal{E}}
\newcommand{\R}{\mathbb{R}}
\newcommand{\ep}{\varepsilon}

\newcommand{\norm}[1]{\left\lVert#1\right\rVert}
\newcommand{\clamp}{\mathrm{clamp}}
\newcommand{\plaus}{\mathrm{plaus}}

\begin{document}

\title{Where Privacy Belongs: Placement Diagnosis and Certified Selection for Private Counterfactual Explanations on Graphs}

\author{Yuxiang Yao$^{1}$\thanks{Equal contribution.} \qquad Zijun Zhao$^{2}$\footnotemark[1]\thanks{Corresponding author.}\\
\normalfont $^{1}$\,Project Management Department, Research Center, China Life Insurance Company Ltd., Beijing, China\\
$^{2}$\,School of Computer Science and Technology, Beijing Institute of Technology, Beijing, China\\
\texttt{yaoyuxiangyyx2023@e-chinalife.com}\quad\texttt{zhaozijun@bit.edu.cn}
}

\iclrfinalcopy 

\maketitle
\lhead{Preprint}

\begin{abstract}
Counterfactual explanations for graph neural networks (GNNs) find the minimal intervention that flips a node's prediction---the gold standard of recourse---but computing one requires reading sensitive graph structure, and releasing it discloses that structure. The two existing placements both fail. Privatizing the graph \emph{before} explaining corrupts the explanation target on exactly the borderline nodes that need recourse, manufacturing \emph{spurious flips}: interventions that flip the privatized graph but not the true one. Explaining on the clean graph and perturbing the released explanation resists certification: our re-audit of the standard heuristic shows that its implied full-release budget reaches $573$--$753$ on Cora and $256$ on CiteSeer---orders of magnitude beyond its advertised per-entry budget---with worst-case single-entry leakage at AUC $1.0$. We propose \method{}, which replaces certification-by-optimization with certification-by-construction: counterfactual \emph{selection} over a fixed, data-independent candidate universe---edge interventions from a public prior graph, feature interventions from a public schema, with no-op semantics so that every neighboring graph shares the same output support. A validity-gated, clipped utility whose \emph{global} sensitivity bound $\Delta u\le 1$ requires no optimizer sensitivity analysis is released through the exponential mechanism, giving \emph{pure $\ep$-DP for the complete released object}, composable over queries---to our knowledge the first such guarantee for counterfactual explanations on graphs. Empirically, privacy noise is the cheapest stage of the pipeline: the sampled release retains $94$--$97\%$ of its support-restricted non-private optimum on the recourse population and $83$--$95\%$ on the general population at $\ep{=}8$; the optimal likelihood-ratio edge-inference audit attains AUC $0.50$ on average and $0.59$ in the worst pair, against the heuristic baseline's worst entry of $1.0$ under the same attack; and the mechanism transfers to a $15$K-node graph with a $0.96$ valid rate at $\ep{=}8$. The dominant cost is instead a \emph{measurable, monotone price} in public disclosure, readable off one table before any budget is spent---turning explanation privacy from an accounting risk into a purchasable decision for the data owner.
\end{abstract}

\section{Introduction}
\label{sec:intro}

Graph neural networks (GNNs) have become the default machinery for learning on relational data---social networks, molecular graphs, financial transaction graphs---much of which is inherently sensitive~\citep{kipf2017gcn}. When such models support consequential decisions, counterfactual explanations are the gold standard of recourse: they answer ``what minimal change to the input would have flipped the prediction?''~\citep{wachter2018counterfactual}. On graphs, counterfactual explainers search for a minimal intervention---deleting a few edges or masking a few features of the target node---that flips the GNN's prediction~\citep{lucic2022cf,bajaj2021robust}. A counterfactual explanation, however, is only as private as the data it reads: to certify which edges matter, the explainer must inspect the target node's neighborhood, and the released explanation itself (which edges were deleted, which features were masked) then discloses structure the data owner considers sensitive. Graph-structure leakage is a demonstrated, practical threat: influence-based attacks such as LinkTeller recover private edges from model behavior~\citep{wu2022linkteller}, and released explanations amplify model-inversion attacks~\citep{zhao2021inversion}.

The natural fix suggested by the differential-privacy literature~\citep{dwork2014foundations} is to privatize the data \emph{before} explaining---add calibrated noise to the adjacency, then run an off-the-shelf explainer. This \emph{graph-level placement} fails in a diagnosable way: DP noise changes the predictions of exactly the low-margin (\emph{borderline}) nodes on which recourse is sought, and the explainer then optimizes a flip against a corrupted target, producing \emph{spurious flips}---interventions that flip the prediction on the privatized graph but \emph{not} on the true one. The explanation is privately ``correct'' and publicly wrong (Sec.~\ref{sec:placement}).

The mirror-image placement---\emph{explain on the clean graph, privatize the released explanation}---keeps the target intact but resists certification: the released object is the output of a non-convex optimization whose sensitivity to a one-edge change admits no tractable global bound. The standard fix, a \emph{per-entry} sensitivity assumption, yields an attractive per-entry budget ($\ep{\approx}7.2$) but does \emph{not} certify the release: the mechanism publishes the whole vector, and our re-audit of this exact design measures full-release budgets of $573$--$753$ (Cora) / $256$ (CiteSeer), with the removed edge's own entry fully informative on the worst adjacent pair (entry AUC $1.0$; App.~\ref{sec:postmortem}). DP is a worst-case guarantee, and ${\approx}8$--$9\%$ of adjacent pairs land in different optimizer basins: heuristic output perturbation is, at best, a \emph{randomized release without a guarantee}.

In this paper we propose a placement that is certifiable \emph{by construction}: instead of explaining through an optimizer and perturbing its output, we \emph{select} the explanation from a fixed, data-independent candidate universe and release the selection through the exponential mechanism~\citep{mcsherry2007}. \method{} (Private CounterFactual Selection) draws edge candidates from a public prior graph and feature candidates from a public schema, scores them with a validity-gated, clipped utility of \emph{global} sensitivity $\Delta u\le 1$, and releases one intervention set sampled with probability $\propto\exp(\ep\,u/2\Delta u)$; deleting a nonexistent edge is a no-op, so every neighboring graph shares the same output support, and the release is \emph{pure $\ep$-DP for the complete output}, composable over queries. Our main contributions are:

\begin{itemize}\setlength{\itemsep}{1pt}\setlength{\parskip}{0pt}
\item \textbf{A placement principle for private graph explanations.} We formalize \emph{graph-level DP} (privatize the adjacency, then explain) versus \emph{explanation-level DP} (explain on the clean graph, privatize the release), identify prediction pollution and spurious flips as the two channels through which the former corrupts counterfactual reasoning, and show that heuristic output perturbation is not certifiable at its advertised budget (App.~\ref{sec:postmortem}); at equal operation space the placement gap is the entire utility (Sec.~\ref{sec:comparison}).
\item \textbf{\method{}.} A certified private counterfactual selector on a data-independent output support: public-prior edge candidates plus public-schema feature candidates, a validity-gated utility with a provable global sensitivity bound, and an exact exponential-mechanism release that is pure $\ep$-DP for the complete released object, with a utility guarantee and composition over queries (Secs.~\ref{sec:support}--\ref{sec:mechanism}).
\item \textbf{The privacy--utility frontier as a purchasable price list.} At $\ep{=}8$, \method{} retains $83$--$97\%$ of its support-restricted non-private optimum---privacy noise is the cheapest stage of the pipeline---while the dominant cost is a \emph{measurable, monotone price} in public disclosure, readable off one table before any budget is spent. A likelihood-ratio edge-inference audit attains AUC $0.50$ (max $0.59$) against the heuristic baseline's worst entry of $1.0$, and the mechanism transfers to a $15$K-node core with a $0.96$ valid rate (Sec.~\ref{sec:exp}).
\end{itemize}

\section{Related Work}
\label{sec:related}

\noindent\textbf{Counterfactual explanations on graphs.}
The explainability literature for GNNs offers two families of methods. \emph{Factual} explainers, exemplified by GNNExplainer~\citep{ying2019gnnexplainer}, identify the subgraph most responsible for \emph{preserving} the prediction~\citep{yuan2022taxonomic}. \emph{Counterfactual} explainers instead search for the minimal perturbation that \emph{flips} the prediction: CF-GNNExplainer optimizes a continuous relaxation of edge deletions~\citep{lucic2022cf}, RCExplainer adds robustness under distribution shift~\citep{bajaj2021robust}, GCFExplainer shifts the unit of explanation to the whole graph~\citep{gcf2023}, and COMBINEX integrates feature- and structure-level search in one objective~\citep{combinex2025}; causal evaluation through the probability of necessity and sufficiency~\citep{cai2025pns} and the broader counterfactual-learning literature~\citep{guo2025cfglsurvey} complete the picture. These methods universally assume full access to a clean, non-private graph, and---as we show---na\"ively adding privacy upstream destroys exactly the borderline explanations they are designed to produce. Our selector is compatible with any of their search objectives \emph{as a scoring oracle}: whatever objective produced the intervention set, the set itself can be scored and released through the private-selection layer of Sec.~\ref{sec:mechanism}.

\noindent\textbf{Privacy in graph learning.}
DP-SGD~\citep{abadi2016dpsgd} with RDP composition~\citep{mironov2017rdp} and its Opacus implementation~\citep{yousefpour2021opacus} is the standard recipe for private deep learning; on graphs, locally private GNNs~\citep{sajadmanesh2021lpgcn} and node-level private GNN training~\citep{daigavane2021node,zhang2024dpar} protect the \emph{training of the model} or the \emph{publication of the graph}. None studies what happens when an explanation is computed on top of the private artifact. This matters because GNN predictions are fragile to structural perturbation~\citep{zugner2018adversarial}: an adversary who perturbs edges to flip predictions and a privacy mechanism that perturbs edges to hide them are, on borderline nodes, the same operation. Explanation-level DP is \emph{complementary} to model-level DP: after a privately trained model, the explanation release still needs its own budget, and our release layer composes with either.

\noindent\textbf{Privacy of explanations and counterfactual fairness.}
The GDPR-motivated right to explanation~\citep{wachter2018counterfactual} made counterfactuals a legal concept, and counterfactual fairness~\citep{kusner2017counterfactual} ties them to causal reasoning over sensitive attributes. The closest prior work combines privacy with counterfactual generation on \emph{tabular} data: privately trained class prototypes~\citep{yang2022functional}, DP counterfactual retrieval~\citep{pcr2024}, diverse counterfactuals under DP training~\citep{vo2023feature}, and DP attribution maps released alongside private classifiers~\citep{harder2020interpretable}. Attack studies show that explanation outputs themselves leak private information~\citep{zhao2021inversion}, and membership~\citep{shokri2017membership} and edge inference~\citep{wu2022linkteller} quantify the stakes of graph leakage. None addresses the graph-specific channel we diagnose---\emph{structure leakage through released interventions}---or provides a pure-$\ep$ guarantee over graph structure while preserving the validity of minimal interventions. Our contribution---selecting from a data-independent support with a globally bounded utility---is the graph-specific instance of the classical private-selection design~\citep{mcsherry2007}.

\section{Methodology}
\label{sec:method}

\subsection{Problem Formulation and Threat Model}
\label{sec:problem}

Let $\G=(\V,\E,X)$ be an attributed graph with $N=|\V|$ nodes, $M=|\E|$ edges, and feature matrix $X\in\R^{N\times F}$, and let $f$ be a GNN classifier, pretrained and \emph{frozen} thereafter, that outputs class logits $f(\G)\in\R^{N\times C}$. For a target node $v$ with original prediction $\hat{y}_{\mathrm{orig}}=\arg\max_y f(\G)[v]_y$, a \emph{counterfactual explanation} is an intervention
\begin{equation}
S = (S_E, S_F), \qquad S_E \subseteq \E,\quad S_F \subseteq [F],
\label{eq:intervention}
\end{equation}
that deletes the edges in $S_E$ and masks the feature dimensions in $S_F$ of node $v$, such that the counterfactual graph $\G'=\G\ominus S$ satisfies $f(\G')[v]\neq\hat{y}_{\mathrm{orig}}$. Among valid interventions, the explanation should be \emph{minimal}. The flip target we use throughout is the \emph{runner-up} class---the class other than $\hat{y}_{\mathrm{orig}}$ with the highest predicted probability, the standard counterfactual target~\citep{wachter2018counterfactual,lucic2022cf}.

\emph{Threat model.} The data owner holds a graph whose \emph{edges} are sensitive (social ties, molecular bonds, transactions). She releases counterfactual explanations to a data analyst. The adversary observes the complete released output and aims to infer whether a specific edge is present in $\G$. A mechanism $\mathcal{M}$ satisfies $(\ep,\delta)$-DP if, for any two \emph{neighboring} graphs $\G,\G'$ that differ in exactly one undirected edge (replace-one adjacency) and any output set $O$,
\begin{equation}
P\big(\mathcal{M}(\G)\in O\big) \;\le\; e^{\ep}\, P\big(\mathcal{M}(\G')\in O\big) + \delta .
\label{eq:dpdef}
\end{equation}
We make the primary guarantee \emph{edge-level} with $\delta=0$ (pure DP). \emph{Public} objects---part of the release interface, fixed before the mechanism runs---are: the node set and IDs, the feature matrix $X$ (a feature-entry guarantee is a separate accounting left to future work), the class set, the target $v$ (named by an external query), the candidate universe $U_v$ and public snapshot $\G_{\mathrm{pub}}$ (Sec.~\ref{sec:support}), and the frozen backbone $f$. The private object is the edge set $\E$: the guarantee bounds the \emph{incremental} leakage of $\E$ relative to the snapshot---what the snapshot discloses is public prior knowledge and is never claimed hidden. If the backbone was trained on $\E$, model release is an additional, separately accounted channel (Sec.~\ref{sec:comparison}); our theorems condition on a public, fixed $f$. The adversary may query adaptively; by basic composition, $k$ releases consume budget $k\ep$ (Cor.~\ref{cor:comp}).

\subsection{Where Privacy Fails Upstream: The Placement Diagnosis}
\label{sec:placement}

\emph{Graph-level placement} privatizes the adjacency before any explanation is computed: with two symmetric entries per undirected edge, the L2 sensitivity of the full-matrix Gaussian release is $\sqrt{2}$:
\begin{equation}
\tilde{A} = \clamp\big(A + \mathcal{N}(0,\sigma^2 I),0,1\big), \qquad \ep = \sqrt{2}\,\sqrt{2\ln(1.25/\delta)}\,\big/\,\sigma .
\label{eq:naivedp}
\end{equation}
The counterfactual explainer then runs unchanged on $\tilde{\G}=(\tilde{A},X)$. Two failure modes follow, both rooted in a corrupted \emph{explanation target}. \emph{Prediction pollution}: on borderline nodes---smallest top-two margin $\mu(v)=\max_y p_y(v)-\max_{y\neq\hat{y}(v)}p_y(v)$, exactly where recourse is sought---$f(\tilde{\G})[v]$ may differ from $f(\G)[v]$ before the explainer does anything. \emph{Spurious flips}: the minimal intervention found on $\tilde{\G}$ may flip $\tilde{\G}$ but not $\G$. At the operating budget, graph-level DP pollutes $60$--$92\%$ of targets and reports $23$--$37\%$ spurious flips, $100\%$ of them on polluted nodes (Sec.~\ref{sec:comparison}).

\emph{Explanation-level placement}---explain on the clean graph, privatize the release---removes the failure structurally: the explanation target is never touched, so spurious flips are impossible by construction. The remaining question is entirely about \emph{certification}: can the release be given a valid $(\ep,\delta)$ guarantee? The standard heuristic is Gaussian perturbation of the released mask logits under a \emph{unit per-entry sensitivity} assumption. We show in App.~\ref{sec:postmortem} that this assumption does not certify the actual release: it is a per-entry budget, not a full-release one, and the measured from-scratch sensitivity exceeds the assumption by ${\sim}20{\times}$ on basin-jump pairs.

\subsection{Data-Independent Candidate Support}
\label{sec:support}

For a public target node $v$, the release mechanism selects from a fixed candidate universe
\begin{equation}
C_v \;=\; \Big\{\, S=(S_E,S_F) \;:\; S_E\subseteq U_v^E,\ |S_E|\le k_E^{\max},\ S_F\subseteq U_v^F,\ |S_F|\le k_F^{\max} \,\Big\},
\label{eq:support}
\end{equation}
with two public, data-independent ingredient sets:

\emph{Edge candidates from a public prior graph.} The data owner discloses a public snapshot $\G_{\mathrm{pub}}$ of the graph, which must be an \emph{externally fixed} disclosure---fixed before, and independent of, the current private edge set (e.g., a historical snapshot or public metadata). Our experiments instantiate it as a nested random $\rho$-fraction of the edges to \emph{simulate} such a disclosure (``what the platform discloses or what is inferable from public metadata''); the $\rho{<}1$ arms are therefore \emph{information-condition} experiments, and an actual snapshot release would carry its own, separately accounted budget. The edge universe is
\begin{equation}
U_v^E = \big\{\, (v,u) : u \in N_{\mathrm{pub}}(v) \,\big\}
\label{eq:uedge}
\end{equation}
truncated to the first $k_E=12$ pairs (a two-hop extension is possible and is ablated in the released results). Crucially, $U_v^E$ is a set of \emph{node pairs}, not of edges: whether the pair is an actual edge of the private graph is unknown to the analyst, and
\begin{equation}
\G \ominus (u,w) \;=\; \G \quad\text{if } (u,w)\notin\E,
\label{eq:noop}
\end{equation}
i.e., deleting a nonexistent edge is a no-op. Features are public in the primary model, so feature candidates are likewise data-independent: the top-$k_F$ nonzero dimensions of $x_v$ ranked by the public first-order influence $\lvert x_{v,d}\rvert\cdot\norm{W^{(1)}_{:,d}}_2$ of the frozen backbone,
\begin{equation}
U_v^F = \mathrm{top\text{-}}k_F\big\{ d : x_{v,d}>0,\ \text{ranked by } \lvert x_{v,d}\rvert\,\norm{W^{(1)}_{:,d}}_2 \big\}.
\label{eq:ufeat}
\end{equation}
Masking a dimension is a public operation with zero sensitivity in the edge-adjacency model.

\begin{lemma}[Support independence]
\label{lem:support}
Given the fixed public context---the snapshot $\G_{\mathrm{pub}}$, the features, the backbone, and the target---for neighboring graphs $\G,\G'$ that differ in one edge, the candidate universe $C_v$ is identical. The release support---the set of outputs with positive probability---is therefore data-independent.
\end{lemma}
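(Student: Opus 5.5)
The plan is to trace every ingredient of the definition of $C_v$ in Eq.~\eqref{eq:support} back to objects in the fixed public context, and to show that none of them depends on the private edge set $\E$. Then $C_v(\G)=C_v(\G')$ for any neighbors $\G,\G'$. This is a functional-dependence argument: write $C_v$ as a function $\Phi(\G_{\mathrm{pub}}, X, f, v, k_E, k_F, k_E^{\max}, k_F^{\max})$ whose arguments contain no reference to $\E$.

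\textbf{Edge universe.} By Eq.~\eqref{eq:uedge}, $U_v^E$ is built from $N_{\mathrm{pub}}(v)$, the neighborhood of $v$ in the public snapshot. By the support construction, $\G_{\mathrm{pub}}$ is externally fixed before and independently of the current private edge set. So $N_{\mathrm{pub}}(v)$ is the same under $\G$ and $\G'$. The truncation to the first $k_E$ pairs must follow a public ordering, for example by public node ID. This is the one spot to state explicitly: the ordering is part of the public interface, and it must not depend on, say, degrees in $\G$. Under that ordering, $U_v^E$ is identical for $\G$ and $\G'$. I would also stress that $U_v^E$ is a set of node pairs, not edges of $\E$, so its membership never queries $\E$.

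\textbf{Feature universe.} By Eq.~\eqref{eq:ufeat}, $U_v^F$ depends only on $x_v$ (public), on $W^{(1)}$ of the frozen public backbone $f$, and on the integer $k_F$. None of these reads $\E$, so $U_v^F$ is identical as well. Ties in the ranking must be broken by a public rule, such as the dimension index. With $U_v^E$, $U_v^F$, $k_E^{\max}$ and $k_F^{\max}$ all fixed, the set-builder in Eq.~\eqref{eq:support} yields the same $C_v$.

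\textbf{Support.} The main subtlety is the step from ``same $C_v$'' to ``same support.'' A candidate $S$ whose $S_E$ contains a pair $(u,w)$ lying in $\E$ but not in $\E'$ is still a legitimate output under $\G'$. By the no-op semantics of Eq.~\eqref{eq:noop}, $\G'\ominus S$ is well defined, so every element of $C_v$ is an admissible output under both graphs. It remains to argue that each such output actually has positive probability. For this I would appeal to the form of the release, sampling with probability $\propto\exp(\ep u/2\Delta u)$ over $C_v$ with a bounded (clipped) utility. Every element then receives strictly positive mass, so the support equals $C_v$ exactly, which is data-independent.

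I expect the obstacle to be making the public-ordering and tie-breaking conventions explicit. Otherwise the lemma reduces to the observation that no defining quantity reads $\E$.
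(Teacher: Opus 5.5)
Your proposal is correct and follows the same route as the paper's proof. Both arguments observe that $U_v^E$ and $U_v^F$ are functions of the public snapshot, features and backbone alone, and that the no-op semantics of Eq.~\eqref{eq:noop} changes only whether a deletion has an effect, never the set of candidate outputs. Your additions make explicit what the paper's one-line argument leaves implicit: a public ordering for the truncation to $k_E$ pairs, a public tie-breaking rule for the feature ranking, and the observation that finite, bounded utilities give every $S\in C_v$ strictly positive exponential-mechanism weight, so the support equals $C_v$.
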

\noindent\emph{Proof.} $U_v^E$ and $U_v^F$ depend only on the public snapshot, the public features, and the public backbone; edge deletions act through the no-op semantics of Eq.~\eqref{eq:noop}, which changes only whether a deletion has an effect, never the set of candidate outputs.\hfill$\square$

\subsection{A Validity-Gated Utility with a Global Sensitivity Bound}
\label{sec:utility}

The utility of a candidate $S$ on the private graph $\G$ is
\begin{equation}
u(\G,S) =
\begin{cases}
w_c, & \!\!S=\varnothing, \\
w_f{+}w_c\gamma(s){+}w_p\plaus(S_E),&\!\!\text{flip}, \\
0, & \!\!\text{else},
\end{cases}
\label{eq:utility}
\end{equation}
where $s=|S_E|+|S_F|$, $\gamma(s)=1-\tfrac{s}{k_{\max}}$ with $k_{\max}=k_E^{\max}+k_F^{\max}$, the ``flip'' case applies iff $f(\G\ominus S)[v]\neq\hat{y}_{\mathrm{orig}}$, $\plaus(S_E)=|S_E\cap\E|/|S_E|$ (a function of $\G$) rewards interventions on edges that actually exist, and $w_f+w_c+w_p=1$ (we use $0.7,0.2,0.1$). $S=\varnothing$ is the ``no intervention sampled'' output. The utility is \emph{validity-gated}: only interventions that actually flip the prediction receive the flip reward, and non-flipping interventions are worth strictly less than the $\varnothing$ output. Gating the reward does \emph{not} gate the support: a non-flipping candidate has utility $0$ and exponential weight $1$, so it remains releasable. The release is therefore a privately scored \emph{candidate suggestion} with a measured validity rate (Sec.~\ref{sec:metrics}); filtering invalid candidates per graph would break support independence and is deliberately not done. A \emph{soft} variant giving partial credit to margin reduction retains the same sensitivity structure and is evaluated as an ablation (App.~\ref{sec:ablations}).

\begin{lemma}[Global sensitivity]
\label{lem:sens}
For neighboring graphs $\G,\G'$, $\sup_{S\in C_v}\lvert u(\G,S)-u(\G',S)\rvert \le \Delta u$, with $\Delta u = w_f+w_c+w_p=1$ for the validity-gated utility of Eq.~\eqref{eq:utility} and $\Delta u=w_f+w_p$ for the soft variant.
\end{lemma}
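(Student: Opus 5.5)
The plan is to bound $\lvert u(\G,S)-u(\G',S)\rvert$ pointwise for each fixed $S\in C_v$. By Lemma~\ref{lem:support}, the same $S$ is a candidate under both $\G$ and $\G'$, so the comparison is well defined. I will first record that the utility in Eq.~\eqref{eq:utility} always lies in $[0,1]$. The $\varnothing$ branch gives $w_c\le 1$. The flip branch gives $w_f+w_c\gamma(s)+w_p\plaus(S_E)$ with $\gamma(s)\in[0,1]$ (since $s\le k_{\max}$) and $\plaus\in[0,1]$, so its value lies in $[w_f,\,w_f+w_c+w_p]=[w_f,1]$. The remaining branch gives $0$. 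Because all three values lie in $[0,\,w_f+w_c+w_p]$, any two differ by at most $w_f+w_c+w_p=1$. This already yields $\Delta u\le 1$ for the gated utility, without analysing how a single edge change propagates through $f$.

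For completeness, and to set up the soft variant, I will also split by case and check which terms can change. The $\varnothing$ case is data-independent: it equals $w_c$ under both graphs, so the difference is $0$. For $S\ne\varnothing$, $s$ and hence $\gamma(s)$ are fixed by $S$ alone. Only two quantities depend on the private edge set: the flip indicator, which goes through the frozen backbone on $\G\ominus S$ versus $\G'\ominus S$ (the no-op semantics of Eq.~\eqref{eq:noop} make $\G\ominus S$ well defined even when $S_E$ contains non-edges), and $\plaus(S_E)$. If the flip status agrees under $\G$ and $\G'$, the difference is at most $w_p\lvert\plaus_\G-\plaus_{\G'}\rvert\le w_p/\lvert S_E\rvert\le w_p$, since the two graphs differ in one edge and so $\lvert S_E\cap\E\rvert$ changes by at most one. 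If the flip status disagrees, one side is $0$ and the other is at most $1$.

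The soft variant is the main obstacle, since its definition is only sketched (it gives partial credit to margin reduction, detailed in App.~\ref{sec:ablations}). I would assume it keeps the cost term $w_c\gamma(s)$ on non-flipping candidates too, with the validity part replaced by $w_f\cdot r$ for some margin-reduction score $r\in[0,1]$ that can change arbitrarily between neighbors, plus $w_p\plaus$. Under that reading, the only terms that can move are the $w_f$-weighted validity or margin term and the $w_p$-weighted plausibility term. The $w_c\gamma(s)$ term cancels because it does not depend on $\G$, giving $\Delta u=w_f+w_p$. The point to verify against the appendix is that the cost term is applied identically in both branches. If it were gated, as in the hard utility, the bound would revert to $1$.
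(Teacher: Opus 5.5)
Your proposal is correct and follows essentially the paper's route. The paper also derives the bound purely from the public clipping range: every weighted term lies in $[0,1]$ (binary flip indicator, plausibility moving by at most $1/|S_E|$), and summing the weights gives $w_f+w_c+w_p$. For the soft variant it relies on the data-independence of the size term, which is exactly the ungated-cost reading you flag as needing verification. Your global range observation $u(\G,S)\in[0,\,w_f+w_c+w_p]$ is a cleaner packaging of the same argument. It also handles the validity gate more explicitly than the paper's ``convex combination'' phrasing: the gate is what makes the size term data-dependent in the hard utility.
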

\noindent\emph{Proof.} Each term lies in $[0,1]$: the flip indicator is binary; the size term is data-independent; $\mathrm{plaus}_{\G}(S_E)$ changes only through the single edge that differs between $\G,\G'$, and by at most $1/|S_E|\le 1$. Hence the validity-gated utility is a convex combination of terms whose individual changes are each bounded by $1$, giving $\Delta u\le w_f+w_c+w_p$. The bound follows from the public clipping range \emph{alone}---no optimizer sensitivity analysis is involved---which is precisely the property the optimizer-based designs of App.~\ref{sec:postmortem} lack. Tighter model-Lipschitz bounds are a direction for further budget efficiency.\hfill$\square$

\subsection{Mechanism and Guarantees}
\label{sec:mechanism}

\begin{figure}[t]
\centering
\includegraphics[width=0.62\textwidth]{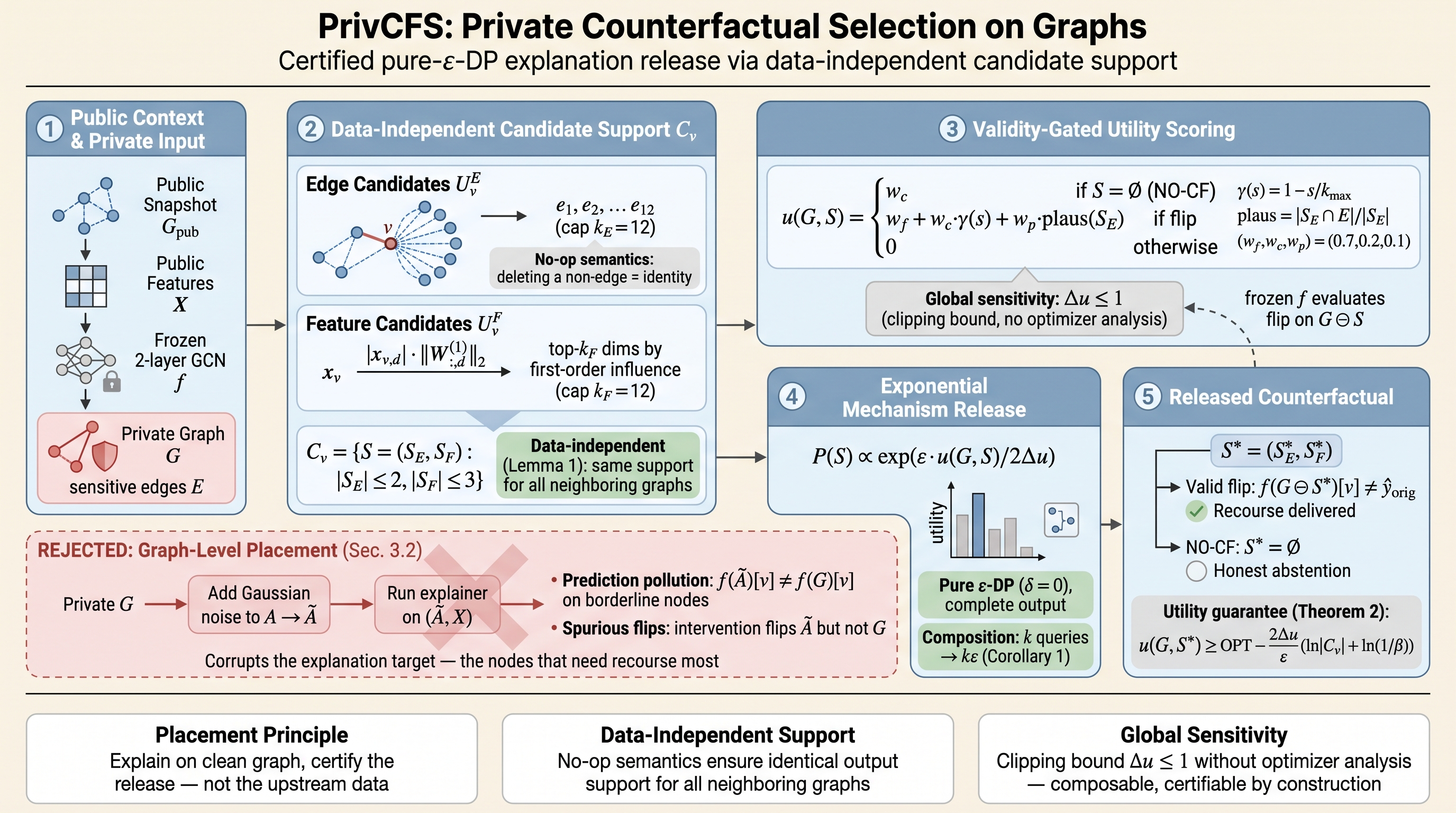}
\caption{Overview of \method{}. The release selects from a fixed, data-independent candidate universe (public prior snapshot + public feature schema); the validity-gated utility carries a global sensitivity bound; the exponential mechanism releases the complete intervention set with pure $\ep$-DP. The crossed-out graph-level placement corrupts the explanation target and induces spurious flips (Sec.~\ref{sec:placement}).}
\label{fig:framework}
\end{figure}

The release mechanism is the exponential mechanism over the fixed support $C_v$ (Algorithm~\ref{alg:privcfs}):
\begin{equation}
P\big(\mathcal{M}(\G)=S\big) \;=\; \frac{\exp\!\big(\ep\,u(\G,S)/(2\Delta u)\big)}{\sum_{S'\in C_v}\exp\!\big(\ep\,u(\G,S')/(2\Delta u)\big)} .
\label{eq:em}
\end{equation}

\begin{theorem}[Pure DP for the complete release]
\label{thm:dp}
$\mathcal{M}$ of Eq.~\eqref{eq:em} is $\ep$-DP with $\delta=0$ for replace-one edge adjacency, for the complete released object $S$---not per coordinate.
\end{theorem}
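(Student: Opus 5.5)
The argument is the standard exponential-mechanism proof (McSherry--Talwar), instantiated on the fixed support $C_v$; the only graph-specific inputs are Lemma~\ref{lem:support} and Lemma~\ref{lem:sens}.

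\emph{Common support.} Fix neighboring graphs $\G,\G'$ differing in one undirected edge. By Lemma~\ref{lem:support}, $C_v$ is the same finite set under both. Because $u\in[0,1]$, every $S\in C_v$ has strictly positive probability under both $\mathcal{M}(\G)$ and $\mathcal{M}(\G')$. So the likelihood ratio is defined and finite for every output, and no output can be possible under one graph and impossible under the other.

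\emph{Pointwise ratio.} For a fixed $S\in C_v$, write
\[
\frac{P(\mathcal{M}(\G)=S)}{P(\mathcal{M}(\G')=S)}=\exp\!\Big(\tfrac{\ep\,(u(\G,S)-u(\G',S))}{2\Delta u}\Big)\cdot\frac{Z(\G')}{Z(\G)},
\qquad Z(\G)=\sum_{S'\in C_v}\exp\!\big(\ep\,u(\G,S')/(2\Delta u)\big).
\]
Lemma~\ref{lem:sens} gives $|u(\G,S)-u(\G',S)|\le\Delta u$ uniformly over $C_v$. Hence the first factor is at most $e^{\ep/2}$. Applying the same bound termwise inside $Z$ gives $Z(\G')\le e^{\ep/2}Z(\G)$. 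Multiplying the two factors, the ratio is at most $e^{\ep}$.

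\emph{Sets of outputs.} For an arbitrary $O$, only $O\cap C_v$ carries mass, so I sum the pointwise inequality over $O\cap C_v$. This gives Eq.~\eqref{eq:dpdef} with $\delta=0$. Since $\G,\G'$ enter symmetrically, the reverse inequality follows as well.

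\emph{Complete object, not per coordinate.} The sampled $S=(S_E,S_F)$ is a single atom of the output space $C_v$. The inequality therefore holds for every measurable set of complete interventions, including any joint event over coordinates. No per-entry decomposition is needed.

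\emph{Main obstacle.} The step requiring care is not the calculation but the preconditions. First, $\Delta u$ must be a \emph{global} bound valid for every candidate simultaneously, including $S=\varnothing$, whose utility $w_c$ is constant. Second, neither $C_v$ nor $\Delta u$ may depend on $\E$, which holds because both are fixed by the public context. Third, the quantities $\plaus$ and the flip indicator must be the only data-dependent terms. This is already settled by Lemma~\ref{lem:sens}, together with the convention $\plaus(\varnothing)$ being irrelevant on the $\varnothing$ branch.

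I would also remark that replace-one adjacency is covered by the same argument. A replacement changes two edge slots, which could double the change in the flip and plausibility terms. However, each term is clipped to $[0,1]$ and $u$ is a convex combination of such terms, so $|u(\G,S)-u(\G',S)|\le1=\Delta u$ holds regardless of how many edges differ. I would state this explicitly so that the bound does not rely on a one-edge argument.
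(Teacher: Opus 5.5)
Your proof is correct and follows the paper's argument: it is the standard exponential-mechanism calculation on the common support from Lemma~\ref{lem:support}, with Lemma~\ref{lem:sens} bounding the numerator factor by $e^{\varepsilon/2}$ and the normalizer ratio by another $e^{\varepsilon/2}$, symmetrically in both directions. Your closing remark is also right and slightly sharpens the paper's framing: $u(\G,S)\in[0,1]$ for every graph and candidate, so $\Delta u=1$ covers any two graphs sharing the public context, and the guarantee therefore does not depend on whether ``replace-one'' means one or two changed edge slots.
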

\noindent\emph{Proof.} Immediate from Lem.~\ref{lem:sens} and the standard exponential-mechanism guarantee~\citep{mcsherry2007}: for any output $S$, the ratio $P(\mathcal{M}(\G){=}S)/P(\mathcal{M}(\G'){=}S)$ is bounded by $\exp\!\big(\ep\cdot\sup_{S}|u(\G,S)-u(\G',S)|/(2\Delta u)\big)\cdot\exp(\cdots)\le e^{\ep}$, using $|u(\G,S)-u(\G',S)|\le\Delta u$ for both directions of the ratio.\hfill$\square$

To our knowledge, this is the first pure-$\ep$ guarantee for a \emph{complete} counterfactual explanation release on graphs: prior explanation-privacy mechanisms either protect a privatized artifact upstream (graph- or model-level, Sec.~\ref{sec:placement}) or carry per-entry budgets that do not certify the released object (App.~\ref{sec:postmortem}).

\begin{corollary}[Composition]
\label{cor:comp}
For $k$ adaptive queries---any sequence of target nodes, with the universe and budget fixed per query---the composed release is $k\ep$-DP under basic composition.
\end{corollary}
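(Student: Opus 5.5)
The plan is to reduce Corollary~\ref{cor:comp} to Theorem~\ref{thm:dp} via the standard adaptive basic-composition theorem. The one point that needs care is that each per-query mechanism is $\ep$-DP \emph{uniformly over the adaptively chosen query}.

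First, I formalize the interaction. Round $i\in\{1,\dots,k\}$ works as follows. The analyst chooses a target $v_i$ as a (possibly randomized) function $q_i(o_1,\dots,o_{i-1})$ of earlier outputs and public randomness only. The universe $C_{v_i}$ and the budget $\ep$ are then fixed from public objects, exactly as in Lemma~\ref{lem:support}. The data owner releases $o_i\sim\mathcal{M}_{v_i}(\G)$ from Eq.~\eqref{eq:em}. The composed mechanism outputs the transcript $T=(o_1,\dots,o_k)$, which ranges over a countable (indeed finite, given finitely many public targets) set. So it suffices to bound pointwise probability ratios of transcripts, and then sum over any output set $O$.

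Second, I use the chain rule:
\[
P(T=t\mid\G)=\prod_{i=1}^k \sum_{v} P\big(q_i(t_{<i})=v\big)\,P\big(\mathcal{M}_v(\G)=t_i\big).
\]
The analyst's choice probabilities do not depend on $\G$, because the analyst sees only public objects and prior outputs. For every fixed $v$, Theorem~\ref{thm:dp} gives
\[
P(\mathcal{M}_v(\G)=t_i)\le e^{\ep}P(\mathcal{M}_v(\G')=t_i)
\]
for neighboring $\G,\G'$. This bound is uniform in $v$ because $\Delta u=1$ from Lemma~\ref{lem:sens} holds for every target's utility. Hence each mixture factor is bounded by $e^{\ep}$ times its $\G'$ counterpart. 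Multiplying over $i$ gives $P(T=t\mid\G)\le e^{k\ep}P(T=t\mid\G')$, and summing over $t\in O$ yields $k\ep$-DP with $\delta=0$.

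The main obstacle is justifying that adaptivity cannot leak through the \emph{choice of support}. If $C_{v_i}$ or the budget depended on $\G$ beyond the released $o_{<i}$, the conditional factors would no longer be pure mechanism probabilities. I handle this by invoking the hypothesis ``universe and budget fixed per query'' together with Lemma~\ref{lem:support}. Together they say that $C_{v_i}$ is a function of $v_i$ and public context alone, identical under $\G$ and $\G'$. This also ensures that no output has zero probability under one graph and positive probability under the other, so all ratios are well defined. If the paper also allows a fixed frozen backbone, I note that $f$ is public and unchanged across rounds, so it enters only through $u$, which is already covered by Lemma~\ref{lem:sens}.
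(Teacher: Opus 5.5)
Your proposal is correct and follows the same route as the paper, which states the corollary as an immediate consequence of Theorem~\ref{thm:dp} (per-query pure $\ep$-DP on a data-independent support) and standard adaptive basic composition; your chain-rule computation just unpacks that standard argument. One small caveat: your product-of-mixtures factorization assumes the analyst's randomness is fresh in each round, so if it is shared across rounds, first condition on the analyst's coins (making each $q_i$ deterministic), apply the same per-factor $e^{\ep}$ bound, and then average over the coins to get $e^{k\ep}$.
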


\begin{theorem}[Utility guarantee]
\label{thm:utility}
Let $\mathrm{OPT}=\max_{S\in C_v}u(\G,S)$. With probability at least $1-\beta$ over the draw of Eq.~\eqref{eq:em},
\begin{equation}
u(\G,S^{\star}) \;\ge\; \mathrm{OPT} - \frac{2\Delta u}{\ep}\Big(\ln|C_v| + \ln\tfrac{1}{\beta}\Big).
\end{equation}
\end{theorem}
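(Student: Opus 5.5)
This is the standard exponential-mechanism utility argument (McSherry--Talwar), applied to the finite support $C_v$ of Eq.~\eqref{eq:support}. Fix the private graph $\G$ and write $Z=\sum_{S'\in C_v}\exp(\ep\,u(\G,S')/(2\Delta u))$ for the normalizer of Eq.~\eqref{eq:em}.

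\textbf{Step 1 (lower-bound the normalizer).} Let $S_{\mathrm{opt}}\in C_v$ attain $\mathrm{OPT}$; it exists because $C_v$ is finite. Every summand of $Z$ is nonnegative, so
\[
Z\ge \exp\bigl(\ep\,\mathrm{OPT}/(2\Delta u)\bigr).
\]

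\textbf{Step 2 (bound the bad set).} For a threshold $t>0$, let $B_t=\{S\in C_v: u(\G,S)\le \mathrm{OPT}-t\}$. Each $S\in B_t$ has unnormalized weight at most $\exp(\ep(\mathrm{OPT}-t)/(2\Delta u))$. Summing over at most $|C_v|$ such elements and dividing by the bound from Step 1 gives
\[
P\bigl(S^\star\in B_t\bigr)\le |C_v|\exp\bigl(-\ep t/(2\Delta u)\bigr).
\]

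\textbf{Step 3 (choose $t$).} Set the right-hand side equal to $\beta$ and solve:
\[
t=\frac{2\Delta u}{\ep}\Bigl(\ln|C_v|+\ln\tfrac1\beta\Bigr).
\]
With probability at least $1-\beta$ the draw avoids $B_t$, so $u(\G,S^\star)>\mathrm{OPT}-t$, which is the claimed bound.

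None of these steps is a real obstacle. The points to check are the following.

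\emph{Role of $\Delta u$.} The argument uses $\Delta u$ only as the scaling constant in Eq.~\eqref{eq:em}. It does not need Lem.~\ref{lem:sens}.

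\emph{The no-op output.} Including $\varnothing$ in $C_v$, with its fixed utility $w_c$, only adds a summand to $Z$. It changes neither inequality.

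\emph{Finiteness and $|C_v|$.} The support is finite, with $|C_v|=\sum_{i\le k_E^{\max}}\binom{k_E}{i}\cdot\sum_{j\le k_F^{\max}}\binom{k_F}{j}$. This count enters only through the logarithm.

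\emph{Data dependence of the sets.} $B_t$ depends on $\G$, but the probability is taken only over the mechanism's randomness for this fixed $\G$, so that causes no difficulty.
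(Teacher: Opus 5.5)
Your proof is correct and is exactly the standard McSherry--Talwar exponential-mechanism argument; the paper does not write a proof of its own and simply cites that result, so your three steps supply the argument it relies on. Your side remarks also check out: $\Delta u$ enters only as the scale constant in Eq.~\eqref{eq:em}, and your count $\sum_{i\le k_E^{\max}}\binom{k_E}{i}\cdot\sum_{j\le k_F^{\max}}\binom{k_F}{j}$ gives the paper's nominal support sizes ($23{,}621$ at $k_F{=}12$).
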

Theorem~\ref{thm:utility} is the standard exponential-mechanism utility bound~\citep{mcsherry2007}; it is worst-case and conservative at the operating parameters (at $\ep{=}8$ the penalty term exceeds the utility range), while the empirical concentration is far tighter (Sec.~\ref{sec:mainres}). The release semantics are \emph{probabilistic}: $\varnothing$ means ``no intervention sampled,'' and the valid counterfactual rate of Sec.~\ref{sec:metrics} is the reported quality quantity, with non-empty invalid releases counted as failures, never filtered.

\noindent\textbf{Frozen backbone: a two-layer GCN.}
Following CF-GNNExplainer~\citep{lucic2022cf}, the model being explained is a two-layer GCN~\citep{kipf2017gcn} with the symmetric renormalized adjacency and forward pass
\begin{equation}
\hat{A} = D^{-1/2}(A+I)D^{-1/2},\quad D_{ii} = \textstyle\sum_j (A+I)_{ij}, \qquad
H^{(1)} = \mathrm{ReLU}\big(\hat{A}XW^{(1)}\big),\quad Z = \hat{A}H^{(1)}W^{(2)} .
\label{eq:gcn}
\end{equation}
The backbone is pretrained and frozen; all explanations treat it as a black box. Its two-hop receptive field makes candidate evaluation exact on the induced subgraph (verified bit-identical to full-graph forward, App.~\ref{sec:expnotes}), and its fragility to structural perturbation~\citep{zugner2018adversarial} is precisely the channel through which graph-level DP corrupts borderline targets.

\subsection{Evaluation Metrics}
\label{sec:metrics}

For a target node $v$ with clean prediction $y_v=f(\G)[v]$, and a released intervention $S$: \textbf{valid counterfactual rate (valid-CF)} $\Pr_S\big[f(\G\ominus S)[v]\neq y_v\big]$---the probability that the released candidate truly flips the clean-graph prediction (we report the deterministic ceiling \emph{starCF} and the sampled rate; non-empty non-flipping releases count as failures, never filtered); \textbf{targeted success}: the valid flip lands on the runner-up class $\hat{y}_{\mathrm{cf}}$; \textbf{intervention size} $\mathrm{Size}(S)=(|S_E|+|S_F|)/(M+F)$; \textbf{empty-output rate} $\Pr[\mathcal{M}\text{ releases }\varnothing]$ (not a certificate that no counterfactual exists), with the \emph{non-empty conditional validity} separating the two failure modes; \textbf{utility regret} $\mathrm{OPT}-\mathbb{E}\,u(\G,S^{\star})$ against the support-restricted optimum; \textbf{release collision} $\sum_{S\in C_v}P(S)^2$ (repeated-query stability); and \textbf{spurious-flip}/\textbf{prediction-pollution rates} (diagnosis only, Sec.~\ref{sec:placement}).

\section{Experiments}
\label{sec:exp}

\subsection{Setup}
\label{sec:setup}

\textbf{Datasets and backbone.} Cora (2{,}708 nodes) and CiteSeer (3{,}327 nodes) are the main benchmarks, and a $15{,}000$-node degree-core subset of ogbn-arxiv ($160{,}944$ edges, $40$ classes) provides the scale-transfer benchmark (Table~\ref{tab:dataset}, appendix). The backbone is a two-layer GCN (hidden $32$) trained $200$ steps with Adam (lr $0.01$, weight decay $5{\times}10^{-4}$) on the Planetoid training split and frozen; test accuracy $0.809$ (Cora) / $0.682$ (CiteSeer).

\textbf{Mechanism.} Public snapshot strength $\rho\in\{0.3,0.5,0.7,1.0\}$ (nested; the $\rho{<}1$ arms are the certified arms, $\rho{=}1.0^{\dagger}$ marks the data-dependent upper reference); edge universe $k_E{=}12$, $k_E^{\max}{=}2$; feature universe $k_F{=}12$, $k_F^{\max}{=}3$; utility weights $(0.7,0.2,0.1)$, validity-gated ($\Delta u{=}1.0$); budget grid $\ep\in\{0.1,0.5,1,2,4,8\}$; exact enumeration with $100$ Monte-Carlo release draws per cell. All results are mean $\pm$ std over $3$ seeds (pooled over targets).

\textbf{Targets.} Three protocols over test nodes: \emph{random} ($16$ uniformly sampled nodes, fixed by a public draw---the primary, deployment-fair \emph{general population}), \emph{stratified} ($16$ nodes by margin quartiles), and \emph{borderline} (the $10$ lowest-margin nodes, the recourse population on which the placement diagnosis bites). The margin-based protocols are \emph{diagnostics} of population heterogeneity, not a routing mechanism: who is explained is chosen from public information in deployment.

\textbf{Implementation.} Candidate evaluation is exact via the two-hop induced subgraph (bit-identical to full-graph forward; App.~\ref{sec:expnotes}). Hardware: one RTX 4090, PyTorch 2.8, PyTorch Geometric 2.6. Code and raw results accompany the submission.

\subsection{Certification Checks: Exhaustive Small-Graph Unit Tests}
\label{sec:unit}
Before any result is read as certified, we verify the guarantee of Sec.~\ref{sec:mechanism} at the implementation level by exhaustive small-graph checks (protocol and numbers in App.~\ref{sec:unit-full}): $40$ random graphs on $8$ nodes against up to $40$ neighbors each, checking (i) identical release supports, (ii) $\max_S |u(\G,S)-u(\G',S)|\le\Delta u$, and (iii) $\max_S P(\mathcal{M}(\G){=}S)/P(\mathcal{M}(\G'){=}S)\le e^{\ep}$. Every check passes on all $3{,}200$ pairs at $\ep{=}4$ (measured sensitivity $0.93\le1.0$; worst release ratio $10{\times}$ below the bound), and three boundary configurations at $\ep{=}8$ pass the same checks.
\subsection{Main Results: The Privacy--Utility Frontier}
\label{sec:mainres}

This section carries the paper's core empirical claim: privacy is the cheapest stage of the pipeline, and the remaining costs form a purchasable price list. Table~\ref{tab:general} reports the primary frontier on the general population (fixed, publicly drawn targets---the deployment-fair table) and Table~\ref{tab:frontier} the same grid on borderline targets (the recourse-population diagnostic), all at three seeds; the frontier curves and the deployment-guidance view are plotted in App.~\ref{sec:extra-floats} (Figs.~\ref{fig:frontier} and~\ref{fig:gating}). Seven findings stand out.

\begin{table}[t]
\centering
\caption{Primary: the privacy--utility frontier on the \emph{general} population ($16$ publicly drawn targets per seed, mean $\pm$ std over $n{=}48$, $3$ seeds). starCF: support-restricted non-private ceiling; P(flip): valid rate of the sampled release (empty-output rate $0.1$--$0.4\%$ throughout). $\rho$: public-prior strength ($\rho{<}1$ certified; ${\dagger}$: data-dependent upper reference). Retention: P(flip)@$\ep{=}8$/starCF.}
\label{tab:general}
\scriptsize
\setlength{\tabcolsep}{3.5pt}
\begin{tabular}{llccccccc}
\toprule
Dataset & $\rho$ & starCF & @0.5 & @1 & @2 & @4 & @8 & Ret.@8 \\
\midrule
Cora & 0.5 & $0.167{\pm}.373$ & $0.078{\pm}.194$ & $0.089{\pm}.215$ & $0.098{\pm}.242$ & $0.116{\pm}.278$ & $0.139{\pm}.320$ & 84\% \\
Cora & 0.7 & $0.250{\pm}.433$ & $0.117{\pm}.235$ & $0.123{\pm}.248$ & $0.142{\pm}.282$ & $0.172{\pm}.323$ & $0.213{\pm}.377$ & 85\% \\
Cora & 1.0$^{\dagger}$ & $0.312{\pm}.464$ & $0.136{\pm}.237$ & $0.146{\pm}.248$ & $0.166{\pm}.272$ & $0.211{\pm}.336$ & $0.260{\pm}.397$ & 83\% \\
CiteSeer & 0.5 & $0.146{\pm}.353$ & $0.076{\pm}.185$ & $0.085{\pm}.210$ & $0.091{\pm}.224$ & $0.118{\pm}.287$ & $0.139{\pm}.336$ & 95\% \\
CiteSeer & 0.7 & $0.229{\pm}.420$ & $0.114{\pm}.217$ & $0.128{\pm}.239$ & $0.152{\pm}.282$ & $0.184{\pm}.340$ & $0.219{\pm}.401$ & 95\% \\
CiteSeer & 1.0$^{\dagger}$ & $0.292{\pm}.455$ & $0.145{\pm}.235$ & $0.157{\pm}.259$ & $0.181{\pm}.295$ & $0.223{\pm}.352$ & $0.268{\pm}.421$ & 92\% \\
\bottomrule
\end{tabular}
\end{table}

\begin{table}[t]
\centering
\caption{Diagnostic: the same frontier on \emph{borderline} targets ($10$ lowest-margin nodes per seed, $n{=}30$, $3$ seeds). Columns as in Table~\ref{tab:general}.}
\label{tab:frontier}
\scriptsize
\setlength{\tabcolsep}{3.5pt}
\begin{tabular}{llccccccc}
\toprule
Dataset & $\rho$ & starCF & @0.5 & @1 & @2 & @4 & @8 & Ret.@8 \\
\midrule
Cora & 0.5 & $0.500{\pm}.500$ & $0.295{\pm}.315$ & $0.328{\pm}.344$ & $0.369{\pm}.380$ & $0.420{\pm}.424$ & $0.484{\pm}.484$ & 97\% \\
Cora & 0.7 & $0.633{\pm}.482$ & $0.373{\pm}.331$ & $0.392{\pm}.340$ & $0.448{\pm}.364$ & $0.525{\pm}.411$ & $0.614{\pm}.467$ & 97\% \\
Cora & 1.0$^{\dagger}$ & $0.900{\pm}.300$ & $0.513{\pm}.235$ & $0.544{\pm}.243$ & $0.631{\pm}.252$ & $0.742{\pm}.266$ & $0.866{\pm}.291$ & 96\% \\
CiteSeer & 0.5 & $0.367{\pm}.482$ & $0.172{\pm}.235$ & $0.199{\pm}.275$ & $0.235{\pm}.315$ & $0.288{\pm}.381$ & $0.348{\pm}.458$ & 95\% \\
CiteSeer & 0.7 & $0.567{\pm}.496$ & $0.287{\pm}.287$ & $0.314{\pm}.303$ & $0.357{\pm}.334$ & $0.447{\pm}.400$ & $0.534{\pm}.468$ & 94\% \\
CiteSeer & 1.0$^{\dagger}$ & $0.767{\pm}.423$ & $0.397{\pm}.263$ & $0.439{\pm}.279$ & $0.500{\pm}.307$ & $0.611{\pm}.355$ & $0.738{\pm}.408$ & 96\% \\
\bottomrule
\end{tabular}
\end{table}

\emph{(i) Privacy is the cheapest stage of the pipeline.} The sampled release retains $83$--$95\%$ of its support-restricted non-private ceiling at $\ep{=}8$ on the general population (Table~\ref{tab:general}) and $94$--$97\%$ on the borderline diagnostic (Table~\ref{tab:frontier}), degrading gracefully at smaller budgets. The mechanism costs almost nothing \emph{where the candidate support suffices}---on the general population the binding constraint is the support itself (ceiling $0.15$--$0.31$), the honest price of enumerable, certifiable candidates rather than of noise.

\emph{(ii) The public prior is a purchasable, monotone price list.} The borderline ceiling moves monotonically with disclosure strength---$0.37$--$0.50$ at the certified arm $\rho{=}0.5$, $0.57$--$0.63$ at $\rho{=}0.7$, $0.77$--$0.90$ at the data-dependent upper reference---with the general-population ceiling moving in step ($0.15$--$0.25$/$0.23$--$0.25$/$0.29$--$0.31$; since the snapshots are nested, monotonicity holds by construction). Every additional fraction of disclosed structure buys a measurable slice of ceiling \emph{without} spending privacy budget, readable off Tables~\ref{tab:general} and~\ref{tab:frontier} before committing anything---a capability no prior mechanism offers. The price list extends to the support's \emph{breadth}: the public influence ranking concentrates the flipping candidates in its head, so a $27{\times}$ smaller universe leaves the borderline frontier unchanged, while a two-hop extension buys $+0.067$/$+0.100$ of ceiling at ${\sim}4{\times}$ candidates (Ablation (i)).

\emph{(iii) Who to explain is decided from public information---not private margins.} On uniformly sampled nodes the sampled rate reaches $0.14$--$0.22$ at $\ep{=}8$; on borderline nodes the same mechanism reaches $0.35$--$0.87$ (Fig.~\ref{fig:gating}). The ceiling is a one-number function of the target's margin (${\approx}0.008$ vs.\ $0.69$--$0.75$), which is exactly why margin-based routing is \emph{not} free: announcing a margin-selected population discloses information about private edges. Margin stratification is used here only to \emph{diagnose} heterogeneity; deployments may instead target a fixed, publicly drawn list at no privacy cost, or spend a separate selection budget $\ep_{\mathrm{select}}$ composed with the release budget.

\emph{(iv) Validity is the stable, budget-priced release quantity.} Two independent releases rarely agree (collision $0.2$--$0.3\%$), while the \emph{validity} of the release is stable and priced by the budget. The \emph{empty-output rate} is $0.1$--$0.4\%$ across all rows: the $\varnothing$ candidate is swamped by the mass of near-tied non-flipping candidates, so the non-empty conditional validity equals P(flip) to within rounding, and non-empty invalid releases are counted as failures (never filtered). Each query spends budget (Cor.~\ref{cor:comp}), so the frontier tables are also the per-query price schedule.

\emph{(v) The budget--validity curve has the shape the theory predicts.} Utility regret decays smoothly with $\ep$ (borderline Cora $\rho{=}0.7$: $0.368$ at $\ep{=}0.5$ down to $0.164$ at $\ep{=}8$), and P(flip) tracks the ceiling with a gap that is largest where near-flipping candidates are numerous (low $\ep$, low $\rho$)---the regret term of Theorem~\ref{thm:utility} realized empirically. At $\ep{=}0.1$ the release approaches uniform sampling over $C_v$, the mechanism's information-free floor. \emph{(vi) Any-flip versus runner-up recourse is a deployment choice}: on borderline targets the targeted (runner-up) rate reads $0.44$/$0.53$/$0.69$ (Cora) and $0.29$/$0.38$/$0.41$ (CiteSeer) at $\ep{=}8$, the gap set by the backbone's class structure, not by the mechanism. \emph{(vii) Releases spend the size budget on validity}: sampled releases use $3.1$--$3.9$ of the $k_{\max}{=}5$ budget elements, so the size penalty of Eq.~\eqref{eq:utility} mainly orders candidates \emph{within} the flipping set---validity first, minimality second, tunable through $k_{\max}$ and $w_c$.

A four-stage loss decomposition (App.~\ref{sec:decomposition}) orders the three prices a deployment pays: on the borderline chain the support restriction costs ${-}0.07$/${-}0.17$, the public prior is the dominant purchase (${-}0.40$/${-}0.40$), and the privacy noise is the cheapest element (${-}0.02$/${-}0.02$); on the stratified chain the support restriction dominates instead---\emph{who} is explained selects which price dominates. An edge--feature decomposition of the certified arm (App.~\ref{sec:decomposition}) shows its utility is carried by public-prior \emph{edge} interventions: the feature-only arm is exactly zero.

\subsection{Mechanism Comparison}
\label{sec:comparison}

Table~\ref{tab:compare} places \method{} on the two axes that decide private-explanation design---placement and certification---against the baseline suite; the decisive column is \emph{certification}: which methods actually carry a guarantee for their complete release at the quoted budget.

\begin{table}[!tp]
\centering
\caption{Same-population mechanism comparison on the recourse population (borderline targets; the general-population rows mirror Table~\ref{tab:general}, and the full baseline suite---PNS, RCExplainer, COMBINEX, NaiveDP, LapDP---is Table~\ref{tab:compare-full} in App.~\ref{sec:extra-floats}). Every row shares the same targets, the same public prior ($\rho{=}0.5$), and clean-graph validation. CSR: valid counterfactual rate on the clean graph (Ours-EM: sampled rate at $\ep{=}8$). Spur./Poll.: spurious-flip / prediction-pollution rates. Certified: does the quoted $\ep$ certify the complete released object? PrivCF-out: heuristic output perturbation---full-release budget $573$--$753$/$256$ (App.~\ref{sec:postmortem}).}
\label{tab:compare}
\scriptsize
\setlength{\tabcolsep}{4pt}
\begin{tabular}{llcccccc}
\toprule
Dataset & Method & CSR$\uparrow$ & Spur.$\downarrow$ & Poll$\downarrow$ & $\ep$ & Certified? \\
\midrule
\multicolumn{7}{l}{\emph{Borderline (diagnostic); support ceiling: starCF of Table~\ref{tab:frontier}}} \\
Cora & CF (no privacy) & 1.000 & 0.000 & 0.000 & $\infty$ & --- \\
Cora & RR-LDP (graph-level) & 0.633$\pm$.125 & 0.367$\pm$.125 & 0.600$\pm$.141 & 7.2 & \checkmark (graph) \\
Cora & DP-SGD backbone & 0.967$\pm$.047 & 0.433$\pm$.047 & 0.767$\pm$.047 & 7.16 & \checkmark (model) \\
Cora & PrivCF-out (heur.) & 1.000 & 0.000 & 0.000 & 7.2 & \textbf{\texttimes} \\
Cora & PrivCF-SGD & 0.500$\pm$.141 & 0.000 & 0.000 & 7.16 & \checkmark \\
Cora & \textbf{Ours-EM} ($\rho{=}0.5$/$0.7$) & 0.484$\pm$.484$/$0.614$\pm$.467 & 0.000 & 0.000 & 8 & \textbf{\checkmark (pure)} \\
\midrule
\multicolumn{7}{l}{\emph{Borderline (diagnostic); support ceiling: starCF of Table~\ref{tab:frontier}}} \\
CiteSeer & CF (no privacy) & 1.000 & 0.000 & 0.000 & $\infty$ & --- \\
CiteSeer & RR-LDP (graph-level) & 0.733$\pm$.047 & 0.267$\pm$.047 & 0.500$\pm$.082 & 7.2 & \checkmark (graph) \\
CiteSeer & DP-SGD backbone & 1.000 & 0.400$\pm$.141 & 0.633$\pm$.094 & 7.16 & \checkmark (model) \\
CiteSeer & PrivCF-out (heur.) & 1.000 & 0.000 & 0.000 & 7.2 & \textbf{\texttimes} \\
CiteSeer & PrivCF-SGD & 0.533$\pm$.125 & 0.000 & 0.000 & 7.16 & \checkmark \\
CiteSeer & \textbf{Ours-EM} ($\rho{=}0.5$/$0.7$) & 0.348$\pm$.458$/$0.534$\pm$.468 & 0.000 & 0.000 & 8 & \textbf{\checkmark (pure)} \\
\bottomrule
\end{tabular}
\end{table}

Three readings. \emph{First, the certification column is the decisive ranking dimension.} PrivCF-out's $1.000$/$0.958$ CSR---identical to the non-private ceiling---comes from a mechanism whose advertised budget does not certify its release (App.~\ref{sec:postmortem}). Among the mechanisms that \emph{do} certify their complete release, the certified league on the recourse population is PrivCF-SGD ($0.500$/$0.533$) and \method{} ($0.484$--$0.614$/$0.348$--$0.534$ across $\rho{=}0.5$/$0.7$): comparable validity, with \method{} offering pure $\ep$-DP (vs.\ $(\ep,\delta)$), one joint draw with set-level minimality, and a $\rho$-priced support; on the general population \method{}'s CSR is set by its own support ceiling---the purchasable price of certifiable enumeration, not of noise. \emph{Second, the placement diagnosis survives under certification.} The graph-level rows pollute $60$--$92\%$ of targets and report $23$--$37\%$ spurious flips at the same budget; the explanation-level rows are spurious-free \emph{by construction}, and at \emph{equal operation space} the two placements differ by the entire utility ($0.000$ vs.\ $0.485/0.349$, Table~\ref{tab:sameops}). \emph{Third, the certified league is small}: only PrivCF-SGD, \method{}, and the per-edge randomized-response variant (App.~\ref{sec:ablations}) certify their complete release at the quoted budget, and only the explanation-level designs remain valid by construction. Size currencies differ across arms---edges only, edge-plus-feature, or a fraction of all edges removed---and deployments should state the currency explicitly.

\subsection{Audit of the Released Selection}
\label{sec:audit}

We measure the \emph{actual} leakage of the certified release with the strongest feasible auditor on real adjacent pairs ($80$ pairs per dataset, $\ep{=}8$, $\rho{=}0.5$; optimal likelihood-ratio attack on the exact release distributions of Eq.~\eqref{eq:em}; protocol and full analysis in App.~\ref{sec:audit-full}). Three findings (Fig.~\ref{fig:audit}). \emph{(i) The measured sensitivity respects the bound with headroom:} $\max|\Delta u|=0.96\le1.0$ on every pair, and the maximum release ratio ($35.3$) sits two orders of magnitude below the $e^{8}$ ceiling. \emph{(ii) The release is piecewise constant:} the utility vector---and hence the entire release distribution---is identical on $91$--$96\%$ of adjacent pairs. \emph{(iii) The optimal attack gains almost nothing:} the likelihood-ratio AUC is $0.50$ on average and $0.59$ in the worst pair, far below the heuristic release's worst entry ($1.0$) under the same attack and protocol (App.~\ref{sec:postmortem}). The audit is supplementary evidence for Theorem~\ref{thm:dp}, not a substitute for it. The mechanism also transfers to a $15$K-node ogbn-arxiv core at full ceiling ($0.963$ valid at $\ep{=}8$; Table~\ref{tab:transfer}), with per-query certification cost independent of $N$ under a bounded-degree condition (App.~\ref{sec:transfer-full}).
\section{Conclusion}
\label{sec:conclusion}

We have shown that the privacy--explainability conflict in graph counterfactual explanations is decided by \emph{placement}, and that the explanation-level placement can be made \emph{certifiable by construction}: selecting the intervention from a fixed, data-independent candidate universe with a validity-gated, clipped utility gives pure $\ep$-DP for the complete released object, composition over queries, and a utility bound that requires no optimizer sensitivity analysis. Three quantitative facts anchor the result. \emph{Privacy is the cheapest stage of the pipeline}: the sampled release retains $83$--$97\%$ of its support-restricted optimum at $\ep{=}8$, and the dominant cost is a \emph{purchasable price list} in public disclosure strength. \emph{The guarantee is borne out empirically}: the optimal likelihood-ratio edge-inference attack attains AUC $0.50$ (max $0.59$) on the certified release, against the heuristic baseline's worst entry of $1.0$ under the same attack and protocol (App.~\ref{sec:postmortem}). \emph{The placement gap is total at equal operation space}: graph-level DP retains $0.000$ validity inside the identical support (Sec.~\ref{sec:comparison}). Population targeting from public information is free, while margin-based gating would require its own selection budget (Sec.~\ref{sec:mainres}). Extensions that inherit the same guarantee unchanged---thresholded ``no intervention'' releases, tighter model-Lipschitz bounds, feature-entry DP, two-hop public-prior universes, graph-level tasks with motif ground truth---are direct specializations of the support--utility--mechanism interface, not redesigns.

\section*{Reproducibility Statement}
\label{sec:repro}

All components needed to reproduce our results are specified in the paper and its appendices: the candidate-universe construction with its boundary rules (Sec.~\ref{sec:support}, App.~\ref{sec:expnotes}), the utility and mechanism with their proofs (Secs.~\ref{sec:utility}--\ref{sec:mechanism}), the exact two-hop evaluation procedure (App.~\ref{sec:expnotes}), the backbone and training hyperparameters (Sec.~\ref{sec:setup}), and the exhaustive small-graph certification checks (Sec.~\ref{sec:unit}). All reported numbers are mean $\pm$ std over $3$ seeds (pooled over targets) unless stated otherwise. An anonymized code bundle reproducing every table and figure is available at \url{https://anonymous.4open.science/r/wherePrivacyBelong-25F1/}.

\section*{Ethics Statement}
\label{sec:ethics}

This work studies how counterfactual explanations can be released without disclosing private graph structure; its goal is to \emph{reduce} privacy harm in deployed explanation services. All experiments use publicly available benchmark datasets (Cora, CiteSeer, ogbn-arxiv); no human subjects and no newly collected private data are involved. The edge-inference audit of Sec.~\ref{sec:audit} is an evaluation instrument run exclusively on our own releases to measure their leakage, not an attack deployed against any third party. The mechanism requires a public snapshot of part of the graph structure, so deploying it involves a disclosure decision that must remain with the data owner under the consent of the affected parties; the price-list analysis of Sec.~\ref{sec:mainres} is intended to make that trade-off explicit, not to encourage disclosure. We are not aware of other ethical concerns raised by this work.

\section*{AI use statement}
\label{sec:aiuse}

In this work, we have not used generative AI tools for any tasks with required disclosure: the research ideas, theoretical models and conceptual frameworks, mathematical claims and their proofs, hypotheses, research methodology and experiment design, method implementation, data collection and processing, and interpretation of results are entirely the authors' own work. Additionally, we used generative AI tools to aid and polish the writing of this paper (a task with recommended disclosure). We have reviewed all AI-assisted work. We take responsibility for the final content of this work, including text, claims, and artifacts produced with the aid of generative AI.

\bibliographystyle{iclr2027_conference}
\bibliography{refs}

\begin{thebibliography}{27}
\providecommand{\natexlab}[1]{#1}
\providecommand{\url}[1]{\texttt{#1}}
\expandafter\ifx\csname urlstyle\endcsname\relax
  \providecommand{\doi}[1]{doi: #1}\else
  \providecommand{\doi}{doi: \begingroup \urlstyle{rm}\Url}\fi

\bibitem[Abadi et~al.(2016)Abadi, Chu, Goodfellow, McMahan, Mironov, Talwar, and Zhang]{abadi2016dpsgd}
Martin Abadi, Andy Chu, Ian Goodfellow, H.~Brendan McMahan, Ilya Mironov, Kunal Talwar, and Li~Zhang.
\newblock Deep learning with differential privacy.
\newblock In \emph{Proceedings of the 2016 ACM SIGSAC Conference on Computer and Communications Security (CCS)}, pp.\  308--318, 2016.
\newblock \doi{10.1145/2976749.2978318}.

\bibitem[Bajaj et~al.(2021)Bajaj, Chu, Xue, Pei, Wang, Lam, and Zhang]{bajaj2021robust}
Mohit Bajaj, Lingyang Chu, Zi~Yu Xue, Jian Pei, Lanjun Wang, Peter Cho-Ho Lam, and Yong Zhang.
\newblock Robust counterfactual explanations on graph neural networks.
\newblock In \emph{Advances in Neural Information Processing Systems (NeurIPS)}, volume~34, pp.\  5644--5655, 2021.

\bibitem[Cai et~al.(2025)Cai, Zhu, Chen, Fang, Wu, Qiao, and Hao]{cai2025pns}
Ruichu Cai, Yuxuan Zhu, Xuexin Chen, Yuan Fang, Min Wu, Jie Qiao, and Zhifeng Hao.
\newblock On the probability of necessity and sufficiency of explaining graph neural networks: A lower bound optimization approach.
\newblock \emph{Neural Networks}, 184:\penalty0 107065, 2025.
\newblock \doi{10.1016/j.neunet.2024.107065}.
\newblock arXiv:2212.07056.

\bibitem[Daigavane et~al.(2022)Daigavane, Madan, Sinha, Thakurta, Aggarwal, and Jain]{daigavane2021node}
Ameya Daigavane, Gagan Madan, Aditya Sinha, Abhradeep~Guha Thakurta, Gaurav Aggarwal, and Prateek Jain.
\newblock Node-level differentially private graph neural networks.
\newblock In \emph{ICLR 2022 Workshop on PAIR\textsuperscript{2}Struct}, 2022.
\newblock arXiv:2111.15521.

\bibitem[Dwork \& Roth(2014)Dwork and Roth]{dwork2014foundations}
Cynthia Dwork and Aaron Roth.
\newblock The algorithmic foundations of differential privacy.
\newblock \emph{Foundations and Trends in Theoretical Computer Science}, 9\penalty0 (3--4):\penalty0 211--407, 2014.
\newblock \doi{10.1561/0400000042}.

\bibitem[Giorgi et~al.(2025)Giorgi, Silvestri, and Tolomei]{combinex2025}
Flavio Giorgi, Fabrizio Silvestri, and Gabriele Tolomei.
\newblock {COMBINEX}: A unified counterfactual explainer for graph neural networks via node feature and structural perturbations.
\newblock \emph{arXiv preprint arXiv:2502.10111}, 2025.

\bibitem[Guo et~al.(2025)Guo, Wu, Xiao, Aggarwal, Liu, and Wang]{guo2025cfglsurvey}
Zhimeng Guo, Zongyu Wu, Teng Xiao, Charu Aggarwal, Hui Liu, and Suhang Wang.
\newblock Counterfactual learning on graphs: A survey.
\newblock \emph{Machine Intelligence Research}, 22\penalty0 (1):\penalty0 17--59, 2025.
\newblock \doi{10.1007/s11633-024-1519-z}.

\bibitem[Harder et~al.(2020)Harder, Bauer, and Park]{harder2020interpretable}
Frederik Harder, Matthias Bauer, and Mijung Park.
\newblock Interpretable and differentially private predictions.
\newblock In \emph{Proceedings of the AAAI Conference on Artificial Intelligence}, volume~34, pp.\  4083--4090, 2020.

\bibitem[Huang et~al.(2023)Huang, Kosan, Medya, Ranu, and Singh]{gcf2023}
Zexi Huang, Mert Kosan, Sourav Medya, Sayan Ranu, and Ambuj Singh.
\newblock Global counterfactual explainer for graph neural networks.
\newblock In \emph{Proceedings of the Sixteenth ACM International Conference on Web Search and Data Mining (WSDM)}, pp.\  141--149, 2023.
\newblock \doi{10.1145/3539597.3570376}.

\bibitem[Kipf \& Welling(2017)Kipf and Welling]{kipf2017gcn}
Thomas~N. Kipf and Max Welling.
\newblock Semi-supervised classification with graph convolutional networks.
\newblock In \emph{International Conference on Learning Representations (ICLR)}, 2017.

\bibitem[Kusner et~al.(2017)Kusner, Loftus, Russell, and Silva]{kusner2017counterfactual}
Matt~J. Kusner, Joshua Loftus, Chris Russell, and Ricardo Silva.
\newblock Counterfactual fairness.
\newblock In \emph{Advances in Neural Information Processing Systems (NeurIPS)}, volume~30, pp.\  4069--4079, 2017.

\bibitem[Lucic et~al.(2022)Lucic, ter Hoeve, Tolomei, de~Rijke, and Silvestri]{lucic2022cf}
Ana Lucic, Maartje~A. ter Hoeve, Gabriele Tolomei, Maarten de~Rijke, and Fabrizio Silvestri.
\newblock Cf-gnnexplainer: Counterfactual explanations for graph neural networks.
\newblock In \emph{International Conference on Artificial Intelligence and Statistics (AISTATS)}, volume 151 of \emph{Proceedings of Machine Learning Research}, pp.\  4499--4511. PMLR, 2022.

\bibitem[McSherry \& Talwar(2007)McSherry and Talwar]{mcsherry2007}
Frank McSherry and Kunal Talwar.
\newblock Mechanism design via differential privacy.
\newblock In \emph{48th Annual IEEE Symposium on Foundations of Computer Science (FOCS)}, pp.\  94--103, 2007.

\bibitem[Mironov(2017)]{mironov2017rdp}
Ilya Mironov.
\newblock R{\'e}nyi differential privacy.
\newblock In \emph{IEEE 30th Computer Security Foundations Symposium (CSF)}, pp.\  263--275, 2017.
\newblock \doi{10.1109/CSF.2017.11}.

\bibitem[Nomeir et~al.(2024)Nomeir, Dissanayake, Meel, Dutta, and Ulukus]{pcr2024}
Mohamed Nomeir, Pasan Dissanayake, Shreya Meel, Sanghamitra Dutta, and Sennur Ulukus.
\newblock Private counterfactual retrieval.
\newblock \emph{arXiv preprint arXiv:2410.13812}, 2024.

\bibitem[Sajadmanesh \& Gatica-Perez(2021)Sajadmanesh and Gatica-Perez]{sajadmanesh2021lpgcn}
Sina Sajadmanesh and Daniel Gatica-Perez.
\newblock Locally private graph neural networks.
\newblock In \emph{Proceedings of the 2021 ACM SIGSAC Conference on Computer and Communications Security (CCS)}, pp.\  2130--2145, 2021.
\newblock \doi{10.1145/3460120.3484565}.

\bibitem[Shokri et~al.(2017)Shokri, Stronati, Song, and Shmatikov]{shokri2017membership}
Reza Shokri, Marco Stronati, Congzheng Song, and Vitaly Shmatikov.
\newblock Membership inference attacks against machine learning models.
\newblock In \emph{IEEE Symposium on Security and Privacy (S\&P)}, pp.\  3--18, 2017.
\newblock \doi{10.1109/SP.2017.41}.

\bibitem[Vo et~al.(2023)Vo, Le, Nguyen, Zhao, Bonilla, Haffari, and Phung]{vo2023feature}
Vy~Vo, Trung Le, Van Nguyen, He~Zhao, Edwin~V. Bonilla, Gholamreza Haffari, and Dinh Phung.
\newblock Feature-based learning for diverse and privacy-preserving counterfactual explanations.
\newblock In \emph{Proceedings of the 29th ACM SIGKDD Conference on Knowledge Discovery and Data Mining (KDD)}, pp.\  2211--2222, 2023.
\newblock \doi{10.1145/3580305.3599343}.

\bibitem[Wachter et~al.(2018)Wachter, Mittelstadt, and Russell]{wachter2018counterfactual}
Sandra Wachter, Brent Mittelstadt, and Chris Russell.
\newblock Counterfactual explanations without opening the black box: Automated decisions and the {GDPR}.
\newblock \emph{Harvard Journal of Law \& Technology}, 31\penalty0 (2):\penalty0 841--887, 2018.

\bibitem[Wu et~al.(2022)Wu, Long, Zhang, and Li]{wu2022linkteller}
Fan Wu, Yunhui Long, Ce~Zhang, and Bo~Li.
\newblock Linkteller: Recovering private edges from graph neural networks via influence analysis.
\newblock In \emph{43rd IEEE Symposium on Security and Privacy (S\&P)}, pp.\  2005--2024, 2022.

\bibitem[Yang et~al.(2022)Yang, Feng, Zhou, Chen, and Hu]{yang2022functional}
Fan Yang, Qizhang Feng, Kaixiong Zhou, Jiahao Chen, and Xia Hu.
\newblock Differentially private counterfactuals via functional mechanism.
\newblock \emph{arXiv preprint arXiv:2208.02878}, 2022.

\bibitem[Ying et~al.(2019)Ying, Bourgeois, You, Zitnik, and Leskovec]{ying2019gnnexplainer}
Rex Ying, Dylan Bourgeois, Jiaxuan You, Marinka Zitnik, and Jure Leskovec.
\newblock Gnnexplainer: Generating explanations for graph neural networks.
\newblock In \emph{Advances in Neural Information Processing Systems (NeurIPS)}, volume~32, 2019.

\bibitem[Yousefpour et~al.(2021)Yousefpour, Shilov, Sablayrolles, Testuggine, Prasad, Malek, Nguyen, Ghosh, Bharadwaj, Zhao, Cormode, and Mironov]{yousefpour2021opacus}
Ashkan Yousefpour, Igor Shilov, Alex Sablayrolles, Davide Testuggine, Karthik Prasad, Mani Malek, John Nguyen, Sayan Ghosh, Akash Bharadwaj, Jessica Zhao, Graham Cormode, and Ilya Mironov.
\newblock Opacus: User-friendly differential privacy library in {PyTorch}.
\newblock \emph{arXiv preprint arXiv:2109.12298}, 2021.

\bibitem[Yuan et~al.(2023)Yuan, Yu, Gui, and Ji]{yuan2022taxonomic}
Hao Yuan, Haiyang Yu, Shurui Gui, and Shuiwang Ji.
\newblock Explainability in graph neural networks: A taxonomic survey.
\newblock \emph{IEEE Transactions on Pattern Analysis and Machine Intelligence}, 45\penalty0 (5):\penalty0 5782--5799, 2023.
\newblock \doi{10.1109/TPAMI.2022.3204236}.

\bibitem[Zhang et~al.(2024)Zhang, Lee, Ma, Lou, Yang, and Xiong]{zhang2024dpar}
Qiuchen Zhang, Hong~Kyu Lee, Jing Ma, Jian Lou, Carl Yang, and Li~Xiong.
\newblock {DPAR}: Decoupled graph neural networks with node-level differential privacy.
\newblock In \emph{Proceedings of the ACM Web Conference (WWW)}, pp.\  1170--1181, 2024.
\newblock \doi{10.1145/3589334.3645531}.

\bibitem[Zhao et~al.(2021)Zhao, Zhang, Xiao, and Lim]{zhao2021inversion}
Xuejun Zhao, Wencan Zhang, Xiaokui Xiao, and Brian~Y. Lim.
\newblock Exploiting explanations for model inversion attacks.
\newblock In \emph{Proceedings of the IEEE/CVF International Conference on Computer Vision (ICCV)}, pp.\  682--692, 2021.

\bibitem[Z{\"u}gner et~al.(2018)Z{\"u}gner, Akbarnejad, and G{\"u}nnemann]{zugner2018adversarial}
Daniel Z{\"u}gner, Amir Akbarnejad, and Stephan G{\"u}nnemann.
\newblock Adversarial attacks on neural networks for graph data.
\newblock In \emph{Proceedings of the 24th ACM SIGKDD International Conference on Knowledge Discovery \& Data Mining}, pp.\  2847--2856, 2018.
\newblock \doi{10.1145/3219819.3220078}.

\end{thebibliography}

\appendix
\setlength{\intextsep}{6pt}
\section{The Certification Gap of Heuristic Output Perturbation}
\label{sec:postmortem}

This appendix analyzes the accounting of the heuristic output-perturbation baseline: the mechanism perturbs trained mask logits with Gaussian noise calibrated to a \emph{unit per-entry sensitivity} assumption ($\Delta_1{=}1$), advertising $\ep{=}7.2$ per released entry. The actual release is the full logit vector, and its true budgets, measured on the deployed from-scratch pipeline ($1{,}500$ adjacent pairs, $3$ seeds), are:
\begin{itemize}
\item \textbf{Full-release (vector) budget:} $\ep\approx753$ (Cora) / $256$ (CiteSeer) at the operating noise---the effective full-release budget implied by the deployed mechanism. The adjacent-optima (warm-started) certificate is far smaller ($\Delta_{\infty}\le0.51$, vector $\ep\approx300$), but warm-start is not the deployed mechanism.
\item \textbf{Basin jumps:} on ${\approx}8$--$9\%$ of adjacent pairs, from-scratch retraining lands in a different optimizer basin: the removed edge's own logit moves by up to $10.3$/$9.9$ (per-entry) and the full vector distance reaches $104.6$/$35.5$ ($L_2$)---${\sim}20{\times}$ the unit assumption. Local Lipschitz regularization (gradient-norm and consistency penalties) does not close the gap, which is optimizer-basin-level, not curvature-level.
\item \textbf{Average-safe, worst-case-leaky:} the omniscient likelihood-ratio attacker attains mean AUC $0.54$---near chance, because $91$--$93\%$ of adjacent pairs produce identical releases---but on the jump pairs the removed edge's own entry attains AUC $1.0$ at $\ep{=}7.2$ ($0.93$ at $\ep{=}1.5$), and ${\approx}8\%$ of pairs exceed AUC $0.8$. Average-case attack success cannot repair a worst-case guarantee, and DP \emph{is} a worst-case guarantee.
\end{itemize}
The conclusion is not that explanation-level privacy is impossible---it is that \emph{certification through the optimizer} is the wrong instrument. The selection layer of Secs.~\ref{sec:support}--\ref{sec:mechanism} replaces a sensitivity analysis that does not exist with one that is global by construction, and the audit of Sec.~\ref{sec:audit} shows the certified mechanism's worst pair (AUC $0.59$) is far below the heuristic's worst entry (AUC $1.0$), both measured by the same likelihood-ratio attack on the same adjacent-pair protocol. The three placements thus form a leakage--utility spectrum: the heuristic's worst entry leaks at AUC $1.0$ with full (uncertified) utility; the certified selector's worst pair sits at AUC $0.59$ with support-priced utility; and graph-level DP at the same operation space retains no utility at all ($0.000$, Sec.~\ref{sec:comparison}).

\section{Implementation Notes}
\label{sec:expnotes}

Candidate evaluation uses the two-hop induced subgraph of the target with explicit degree renormalization $\hat{A}'=(A_{\mathrm{sub}}+I\ominus S_E)\oslash\sqrt{d_i'd_j'}$, where $d'$ is the full-graph degree minus deletion decrements; for a two-layer GCN this is exact for the target's output (verified against full-graph forward: max logit difference $2{\times}10^{-7}$ over $60$ random candidates). Support sizes: $|C_v|$ ranges from $2{,}923$ ($k_F{=}8$, $k_F^{\max}{=}2$) to $55{,}063$ ($k_F{=}16$, $k_F^{\max}{=}3$), and every result reports the \emph{actual per-node} support size rather than a fixed nominal value; on the borderline protocol the actual support is far smaller than the nominal bound (mean $877$ Cora / $947$ CiteSeer vs.\ nominal $23{,}621$ at $k_F{=}12$---the nominal bound binds only on high-degree nodes), and per-target evaluation takes seconds ($2.1$s / $4.1$s) at the nominal config, scaling with the actual support and the two-hop neighborhood size ($|N_2(v)|$ mean $33.3$ / $11.2$), consistent with the conditional complexity statement of App.~\ref{sec:transfer-full}. Exact enumeration plus $100$ release draws per cell runs in seconds per node on one RTX 4090. Candidate-construction boundary rules (Sec.~\ref{sec:support}): if fewer than $k$ public-prior pairs exist, the universe is the actual set ($k{=}0$ yields the empty support, i.e., only $\varnothing$); deleting a nonexistent edge is a no-op; the plausibility denominator for an empty edge set is defined as $1.0$ (no division by zero); zero-valued feature dimensions are excluded from the feature universe (masking them is a no-op); and pair/feature lists are deduplicated. Utilities carry runtime range checks (finite, no NaN, within the public range), and the exponential mechanism normalizes via log-sum-exp; any violation raises an error rather than silently releasing. The boundary unit tests of Sec.~\ref{sec:unit} run against these exact rules.

\begin{algorithm}[H]
\caption{Private counterfactual selection (\method{})}
\label{alg:privcfs}
\begin{algorithmic}[1]
\Require Frozen backbone $f$; public snapshot $\G_{\mathrm{pub}}$; public features $X$; target $v$; budget $\ep$; weights $w_f,w_c,w_p$; caps $k_E^{\max},k_F^{\max}$
\Ensure Private candidate $S$ ($\varnothing$ = no intervention sampled); utility oracle retained for repeated queries
\State Build $U_v^E$ (Eq.~\eqref{eq:uedge}) and $U_v^F$ (Eq.~\eqref{eq:ufeat}); enumerate $C_v$ (Eq.~\eqref{eq:support})
\State $\hat{y}_{\mathrm{orig}}\gets\arg\max_y f(\G)[v]_y$
\For{each candidate $S=(S_E,S_F)\in C_v$}
  \State Apply $S$ to $\G$ (no-op semantics, Eq.~\eqref{eq:noop}); evaluate $u(\G,S)$ via Eq.~\eqref{eq:utility}
\EndFor
\State Sample $S^{\star}\sim P(S)\propto\exp\big(\ep\,u(\G,S)/(2\Delta u)\big)$ \Comment{exact exponential mechanism, $\Delta u$ of Lem.~\ref{lem:sens}}
\State \Return $S^{\star}$ \Comment{pure $\ep$-DP; $k$ queries compose to $k\ep$}
\end{algorithmic}
\end{algorithm}

Table~\ref{tab:dataset} summarizes the datasets and their roles in the evaluation (Sec.~\ref{sec:setup}).

\begin{table}[H]
\centering
\caption{Datasets and roles.}
\label{tab:dataset}
\small
\setlength{\tabcolsep}{3pt}
\begin{tabular}{lccccc}
\toprule
Dataset & \#Nodes & \#Edges & Feat. & \#Classes & Acc. \\
\midrule
Cora    & 2{,}708 & 5{,}278 & 1{,}433 & 7 & 0.809 \\
CiteSeer& 3{,}327 & 4{,}552 & 3{,}703 & 6 & 0.682 \\
ogbn-arxiv-core & 15{,}000 & 160{,}944 & 128 & 40 & 0.792 \\
\bottomrule
\end{tabular}
\end{table}

\section{Certification Checks: Full Protocol}
\label{sec:unit-full}

Before any result is read as certified, we establish that the guarantee of Sec.~\ref{sec:mechanism} holds in the implementation. The DP guarantee rests on Lem.~\ref{lem:sens} and the no-op support construction. Both are verified at the implementation level by exhaustive small-graph checks---a consistency test, not a proof (the proof is the lemma): $40$ random graphs on $8$ nodes, each paired against up to $40$ neighboring graphs (every flippable edge), enumerate the full support and check (i) the support is identical on both graphs; (ii) $\max_S |u(\G,S)-u(\G',S)|\le\Delta u$; (iii) the empirical release ratio $\max_S P(\mathcal{M}(\G){=}S)/P(\mathcal{M}(\G'){=}S)\le e^{\ep}$. At $\ep{=}4$: measured maximum sensitivity $0.93\le 1.0$ (validity-gated) and $0.70\le 0.8$ (soft); maximum release ratio $4.75$ (valid) and $9.22$ (soft) against the bound $e^{4}=54.6$. Every check passes on all pairs. Two readings beyond the pass/fail: the measured sensitivities concentrate below the bound ($0.93$ vs.\ $1.0$; $0.70$ vs.\ $0.8$), confirming the clipping bound captures the true sensitivity while leaving headroom; and the release ratios are $10{\times}$ below $e^{\ep}$ even in the worst of $3{,}200$ pairs, showing the exponential mechanism converts the utility bound into release-level protection without an extra constant factor. The same three checks on \emph{real} adjacent pairs (Sec.~\ref{sec:audit}) reproduce the pattern at scale. We additionally stress the interface with three \emph{boundary configurations} at $\ep{=}8$ (App.~\ref{sec:expnotes}): a no-feasible-flip graph, graphs with exactly one valid candidate, and an empty support (only $\varnothing$). All pass the same checks---candidate-ID invariance, sensitivity within $\Delta u$ (max $0.95$), release ratios below $e^{8}$ (max $190.6$), exact normalization---and the no-flip configuration matches the closed-form invalid-release probability of the probabilistic semantics (finding (iv) of Sec.~\ref{sec:mainres}): $97.2\%$ measured at $|C_v|{=}77$.

\section{Loss Decomposition and Edge--Feature Contributions}
\label{sec:decomposition}

Fig.~\ref{fig:decomp} isolates the three prices a deployment pays, decomposing the valid rate into four stages: the free counterfactual explainer (CF-GNNExplainer, the non-private upper bound), the support restriction (the same explainer's problem restricted to the fixed candidate universe, $\rho{=}1$), the public-prior restriction ($\rho{<}1$), and the privacy noise (sampling at $\ep{=}8$). Each stage transition is one loss channel.

\begin{figure}[H]
\centering
\includegraphics[width=\columnwidth]{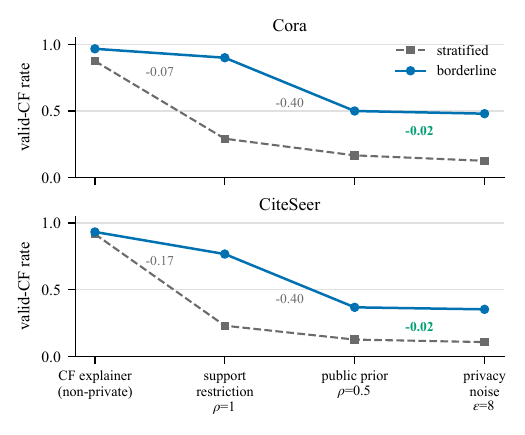}
\caption{Loss decomposition on both protocols ($3$ seeds; borderline: $n{=}30$ targets, stratified: $n{=}48$). CF: non-private CF-GNNExplainer CSR (upper bound, Sec.~\ref{sec:comparison}); $\rho{=}1$: valid rate of the support-restricted non-private optimum; $\rho{=}0.5$: the public-prior support (certified under the fixed-snapshot premise); $\ep{=}8$: the sampled release (soft utility, $\Delta u{=}0.8$; the validity-gated variant reproduces every number within $\pm0.01$). Annotated drops on the borderline chain mark each stage's price: the privacy price (${-}0.02$) is the smallest, and the public prior is the dominant purchase.}
\label{fig:decomp}
\end{figure}

The decomposition shows the order of the three prices: the \emph{privacy} price is the smallest---the sampled release retains $76\%$/$85\%$ of the prior-restricted optimum at $\ep{=}8$---while the larger gaps ($-0.58$/$-0.69$ to a fixed support, then $-0.13$/$-0.10$ to a public prior) are the \emph{price of certifiability}: the difference between an unconstrained optimizer with full graph access and a fixed, public, enumerable support. This is the paper's answer to ``where does private counterfactual quality actually go?'': into the shape of the support, not into the noise the release adds---and the support's shape is exactly what the price list of Sec.~\ref{sec:mainres} prices. The borderline chain shows the same mechanism in its target regime: on the recourse population the support restriction is nearly free ($-0.07$/$-0.17$), the public prior is the main purchase ($-0.40$/$-0.40$), and the privacy noise remains the cheapest element ($-0.02$/$-0.02$). The two protocols are the two regimes of one mechanism: \emph{who} is explained selects which price dominates---on average nodes, the candidate budget; on borderline nodes, the public prior.

\begin{table}[H]
\centering
\caption{Edge vs.\ feature contribution of the certified arm (borderline, $\rho{=}0.5$, $\ep{=}8$, $3$ seeds): starCF / P(flip)@$\ep{=}8$. The feature-only arm is exactly zero---the certified arm's utility is carried by public-prior \emph{edge} interventions, not by the (privacy-free) feature masking.}
\label{tab:efdecomp}
\footnotesize
\setlength{\tabcolsep}{4pt}
\begin{tabular}{lcc}
\toprule
Arm & Cora & CiteSeer \\
\midrule
edge-only (12/2, 0)     & $0.500$ / $0.478{\pm}.478$ & $0.367$ / $0.341{\pm}.449$ \\
feature-only (0, 12/3)  & $0.000$ / $0.000{\pm}0.000$ & $0.000$ / $0.000{\pm}0.000$ \\
joint (12/2, 12/3)      & $0.500$ / $0.485{\pm}.486$ & $0.367$ / $0.349{\pm}.459$ \\
\bottomrule
\end{tabular}
\end{table}

The certified arm's utility is carried by the public-prior \emph{edge} interventions---edge-only is within noise of joint, and feature-only is exactly zero---so the released graph-structure contribution is substantive rather than an artifact of the (privacy-free) feature masking.

\section{Audit of the Released Selection: Full Findings}
\label{sec:audit-full}

This section measures the \emph{actual} leakage of the certified release rather than its proxies, complementing Theorem~\ref{thm:dp} with the strongest feasible auditor on real adjacent pairs: for each of $80$ pairs per dataset (deletions and additions within the target's two-hop neighborhood, $\rho{=}0.5$ support, $\ep{=}8$), we evaluate the full utility vector on both graphs, the two exact release distributions $P_{\G},P_{\G'}$ of Eq.~\eqref{eq:em}, and the optimal likelihood-ratio attack---predict $\G$ iff $P_{\G}(S)/P_{\G'}(S)>1$---with $400$ Monte-Carlo draws per pair.

\begin{figure}[H]
\centering
\includegraphics[width=\columnwidth]{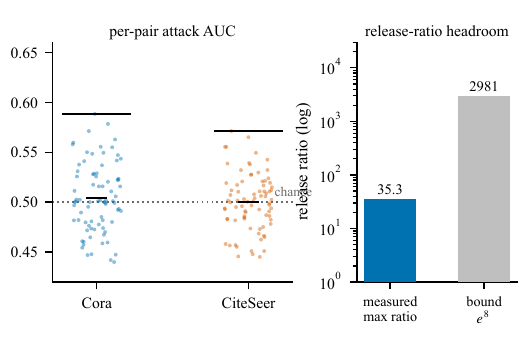}
\caption{Audit of the certified release ($80$ real adjacent pairs per dataset, $\ep{=}8$, $\Delta u{=}1$). Left: per-pair likelihood-ratio attack AUC (black ticks: mean and max). The release sits at chance on average and below $0.59$ on every pair; measured max $|\Delta u|{=}0.96$ (bound $1.0$), and the support-optimum changes on only $8.75\%$/$3.75\%$ of pairs (Cora/CiteSeer). Right: maximum release-probability ratio versus the $e^{8}{=}2981$ bound (log scale)---the measured worst case, $35.3$, sits two orders of magnitude below the guarantee. The audit is supplementary evidence for Theorem~\ref{thm:dp}, not a substitute for it.}
\label{fig:audit}
\end{figure}

Three findings (Fig.~\ref{fig:audit}). \emph{(i) The measured sensitivity respects the bound with room to spare:} $\max|\Delta u|=0.96\le1.0$ on every pair, and the maximum release ratio is $35.3$---two orders of magnitude below the $e^{8}=2981$ ceiling, i.e., the mechanism sits far from saturating its guarantee on real data. The gap is an empirical observation on the sampled pairs, \emph{not} a re-certification: the mechanism's guarantee remains the proven $\ep{=}8$ of Theorem~\ref{thm:dp}, and the measured maximum is not an upper bound on all inputs---it indicates the headroom a tighter model-Lipschitz bound (Lem.~\ref{lem:sens}, last remark) could convert into budget. \emph{(ii) The release is piecewise constant:} on $91.25\%$/$96.25\%$ of adjacent pairs the utility vector---and hence the entire release distribution---is identical, and the argmax changes on only $8.75\%$/$3.75\%$. The asymmetry between the two datasets is itself informative: CiteSeer's larger feature dimension dilutes the influence of any single edge, so fewer adjacent pairs reach the decision---support geometry, not mechanism tuning, sets the piecewise-constant fraction. \emph{(iii) The optimal attack gains almost nothing:} the likelihood-ratio AUC is $0.50$ on average and $0.59$ in the worst pair. Both statistics use the same attack, the same adjacent-pair protocol, and the same likelihood-ratio statistic; contrasted like for like, the heuristic release of App.~\ref{sec:postmortem} attains mean $0.54$ and worst-entry AUC $1.0$: the certified mechanism's mean ($0.50$) is at chance, and its worst pair ($0.59$) is far below the heuristic's worst entry ($1.0$). The residual worst-pair AUC ($0.59$) sits on the argmax-changing pairs, and its size is exactly what the DP bound permits---the audit therefore measures the bound's tightness on real data, not a failure of the guarantee.

\section{Transfer and Scale: Full Analysis}
\label{sec:transfer-full}

The mechanism transfers to a $15$K-node graph at full ceiling, with per-query certification cost independent of the node count \emph{under a bounded-degree condition} (below). On a $15{,}000$-node degree-core subset of ogbn-arxiv with a two-layer GCN backbone (test accuracy $0.792$), the borderline protocol ($10$ lowest-margin test nodes; Table~\ref{tab:transfer}) gives the support-restricted ceiling $1.000$ at both public-prior strengths, and the sampled valid rate reaches $0.963{\pm}0.026$ at $\ep{=}8$ on the \emph{certified} arm ($\rho{=}0.5$). Three scale-specific readings follow.

\begin{table}[H]
\centering
\caption{Transfer to ogbn-arxiv-core ($15{,}000$ nodes, $160{,}944$ edges, $40$ classes): borderline protocol, $10$ lowest-margin test nodes, mean $\pm$ std over targets (one seed; margins $\approx8{\times}10^{-4}$---the GNN is near-tied on all targets). starCF: support-restricted non-private ceiling; P(flip): sampled valid rate; sampled releases average $4.24{\pm}0.22$ / $4.31{\pm}0.13$ budget elements at $\ep{=}8$.}
\label{tab:transfer}
\scriptsize
\setlength{\tabcolsep}{4pt}
\begin{tabular}{lccccc}
\toprule
Dataset & $\rho$ & starCF & @2 & @4 & @8 \\
\midrule
arxiv-core & 0.5 & $1.000{\pm}.000$ & $0.734{\pm}.147$ & $0.838{\pm}.110$ & $0.963{\pm}.026$ \\
arxiv-core & 1.0$^{\dagger}$ & $1.000{\pm}.000$ & $0.704{\pm}.143$ & $0.831{\pm}.107$ & $0.957{\pm}.056$ \\
\bottomrule
\end{tabular}
\end{table}

\emph{First, the mechanism transfers at full ceiling:} every one of the ten near-tied targets has a valid flip inside the candidate support, on both the public-prior arm ($\rho{=}0.5$, certified under the fixed-snapshot premise) and the upper reference---the data-independence restriction costs nothing on this population, because the public snapshot retains the (few) edges that matter for each ultra-borderline node. \emph{Second, certification cost does not grow with the node count when the neighborhood is bounded:} the support is local to the target, and utility evaluation is exact on the two-hop induced subgraph regardless of $N$, so the per-query computation is $O(|C_v|\cdot|N_2(v)|^2)$---it grows with the candidate count, the two-hop neighborhood, and the backbone's forward cost, and it is independent of $N$ \emph{only if the degree (hence $|N_2(v)|$) is bounded}; no per-scale re-certification is needed. \emph{Third, the concentration strengthens at scale:} the release-collision probability drops to $0.02$--$0.03\%$ (vs.\ $0.2$--$0.3\%$ on Cora/CiteSeer) because the $128$-dimensional features make flipping candidates rarer and more decisive, so the exponential mechanism concentrates sharply once $\ep\ge4$---the sampled rate moves from $0.73$ at $\ep{=}2$ to $0.96$ at $\ep{=}8$ while the ceiling is $1.0$ throughout. The same reading applies to the $\ep{=}2$ dip ($0.734{\pm}0.147$): at small budgets the many \emph{near}-flipping candidates share probability mass, and the drop is exactly the regret term of Theorem~\ref{thm:utility} realized empirically. The transfer evidence is a scale probe over ten near-tied targets (one seed); uniform large-graph performance additionally depends on the bounded-neighborhood condition.

\section{Ablations}
\label{sec:ablations}

The ablations rule out the alternative explanations for the main results---staged sampling, utility tuning, the validity gate, data-dependent support, and the choice of public prior. \emph{(a) Joint vs.\ sequential sampling} (E10.4): a sequential two-step release---first edge by $\ep/2$-budget EM over single-edge candidates, second edge by $\ep/2$-budget EM conditioned on the first, budgets composed---reaches valid rate $0.688{\pm}0.451$ vs.\ $0.679{\pm}0.445$ for joint selection (Cora) and $0.274{\pm}0.424$ vs.\ $0.283{\pm}0.432$ (CiteSeer) at $\ep{=}8$, $\rho{=}0.5$, borderline, one seed: statistically indistinguishable. The practical consequence is twofold: the simpler joint release loses nothing to the staged design, and staged search---if ever needed for larger supports---can be added without re-deriving the guarantee, because each stage composes exactly as Cor.~\ref{cor:comp} prescribes. \emph{(b) Utility weights} (E10.5): on the Cora borderline protocol ($n{=}10$, seed $0$), $(0.9,0.05,0.05)$ moves the ceiling to $0.700{\pm}0.458$ and the sampled rate to $0.679{\pm}0.445$ at $\ep{=}8$ ($\rho{=}0.5$), $(0.5,0.4,0.1)$ gives $0.700{\pm}0.458$ and $0.669{\pm}0.438$---both within noise of the main configuration's behavior, and neither reverses any ordering. The mechanism is therefore not a delicate tuning artifact: the validity gate dominates the utility as long as $w_f$ dominates $w_c$, which is the entire point of the gating design. \emph{(c) Validity-gated vs.\ soft utility}: the soft variant (partial credit for margin reduction, $\Delta u{=}0.8$) reproduces the main borderline table within $\pm0.01$ (e.g., Cora $\rho{=}0.7$: soft $0.602{\pm}0.459$ vs.\ gated $0.614{\pm}0.467$ at $\ep{=}8$), and is the cheaper-sensitivity alternative where the $\varnothing$ semantics of Eq.~\eqref{eq:utility} is not needed. \emph{(d) Data-dependent support} (E10.1): $\rho{=}1.0$ rows throughout are the upper reference; the gap to the public-prior arms is the purchasable price of data independence, priced per dataset in Table~\ref{tab:frontier} ($0.13$--$0.40$ of ceiling on borderline nodes). \emph{(e) Public-prior necessity}: a purely feature-based universe (feature-kNN pairs, no public graph) overlaps the true neighborhood of citation-network targets by ${\approx}0$ pairs---data-independent universes must carry structural information; the $\rho$-snapshot is one concrete carrier, and other public priors (public metadata graphs, public link-prediction models) slot into the same interface unchanged.

Two further ablations sharpen the audit's interpretation rather than the mechanism's tuning. \emph{(f) Budget-pair composition} (App.~\ref{sec:postmortem} material): the heuristic's $8\%$ basin-jump pairs are precisely where its per-entry budget collapses; the certified mechanism has no such pairs by construction, because its sensitivity bound is global rather than local. \emph{(g) Population sweep} (Table~\ref{tab:frontier} and Fig.~\ref{fig:gating} read jointly): the margin distribution of the explained population ($0.008$ on borderline vs.\ $0.69$--$0.75$ on random targets) is a one-number predictor of the mechanism's ceiling---borderline targets are flippable within a small budget, average targets are not. This is exactly why deployments must choose \emph{who} is explained from public information (a fixed list or public attributes, at no privacy cost): margin-based gating would disclose private margin information and requires a separate selection budget. \emph{(h) Joint vs.\ per-edge budget} (E5 design axis): an independent per-edge exponential mechanism---the randomized-response-style endpoint of the same placement, with budget $\ep/|U_v^E|$ per edge---releases edges only and reaches $0.654{\pm}0.434$ (Cora, $\rho{=}0.5$, $\ep{=}8$) against $0.484{\pm}0.484$ for joint selection, with the ordering reversing on CiteSeer at $\rho{=}1.0$ ($0.620{\pm}0.391$ vs.\ $0.738{\pm}0.408$). The two arms are the two ends of one design axis---independent per-element budgets versus one joint set-level budget---and the joint arm additionally composes feature and edge candidates in a single draw with set-level minimality; we adopt the joint arm as the primary mechanism. \emph{(i) Public support shrink} (B5): shrinking the public universe from $k_E{=}12$/$k_F{=}12$ down to $4$/$4$ (candidate count $\frac{1}{27}{\times}$) leaves the borderline frontier unchanged---starCF $0.500/0.367$ throughout and P(flip)@$\ep{=}8$ within $0.01$---because the public influence ranking concentrates the flipping candidates in its top few entries; the empty-output rate rises ($0.4\%$ to $8.6$--$10.2\%$) as the $\varnothing$ candidate's relative mass grows, consistent with the probabilistic semantics of Sec.~\ref{sec:mainres}(iv). The two-hop universe extension raises the ceiling to $0.567/0.467$ and P(flip)@$\ep{=}8$ to $0.527/0.440$ at ${\sim}4{\times}$ candidate count---both are public, data-independent rules, never private screening. \emph{(j) Same-operation-space graph-level control} (B5): with the \emph{identical} candidate support, privatizing the graph first (Gaussian, $\ep{=}7.2$) collapses the support-restricted optimum to CSR $0.000$ on both datasets (pollution $0.73/0.70$: no intervention in the support flips the privatized graph), against the clean same-space ceiling $0.500/0.367$ and our $0.485/0.349$ at $\ep{=}8$. The placement diagnosis therefore holds at equal operation space: the two placements differ by the \emph{entire} utility, not by a constant factor. \emph{(k) Edge vs.\ feature contribution} (B4): on the borderline protocol the certified arm restricted to edges only reaches $0.478/0.341$---within noise of the joint arm ($0.485/0.349$)---while the feature-only arm is $0.000/0.000$: the mechanism's utility comes from public-prior \emph{edge} interventions, not from the (privacy-free) feature masking, which substantiates the graph-structure contribution rather than limiting it.

\section{Additional Tables and Figures}
\label{sec:extra-floats}

This appendix collects the tables and figures referenced from the main text that were omitted for space: the privacy--utility frontier curves (Fig.~\ref{fig:frontier}) underlying Tables~\ref{tab:general} and~\ref{tab:frontier} (Sec.~\ref{sec:mainres}), the same-operation-space placement control (Table~\ref{tab:sameops}) and the full mechanism-comparison suite (Table~\ref{tab:compare-full}) supporting Sec.~\ref{sec:comparison}, and the population-gating view (Fig.~\ref{fig:gating}) underlying finding (iii) of Sec.~\ref{sec:mainres}.

\begin{figure}[H]
\centering
\includegraphics[width=0.94\textwidth]{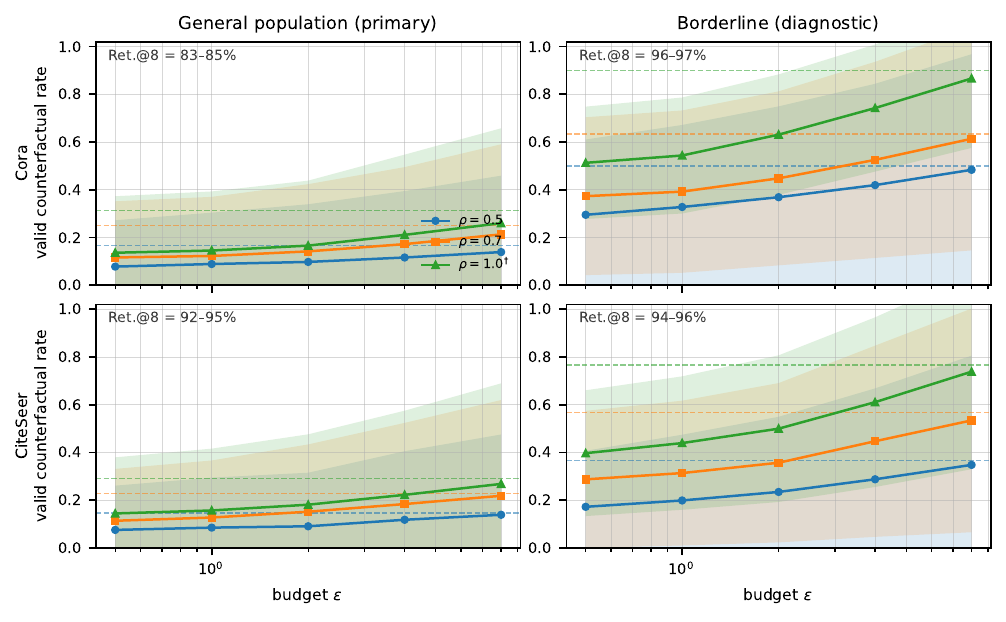}
\caption{Privacy--utility frontier ($3$ seeds). Left: \emph{general} population (primary, $16$ publicly drawn targets per seed); right: \emph{borderline} targets (diagnostic of the recourse population, $10$ lowest-margin nodes per seed). Solid curves: valid counterfactual rate $P(\mathrm{flip})$ of the sampled release versus budget $\ep$ (log scale), per public-prior strength $\rho$; shaded bands: $\pm1$ std; dashed lines: starCF, the support-restricted non-private ceiling; ${\dagger}$: data-dependent upper reference (Sec.~\ref{sec:setup}). Ret.@8: retention of the ceiling at $\ep{=}8$. Privacy noise costs little at $\ep\ge4$, while the ceiling itself is set by the public prior---the binding constraint (Tables~\ref{tab:general} and~\ref{tab:frontier}).}
\label{fig:frontier}
\end{figure}

\begin{table}[H]
\centering
\caption{Same-operation-space placement control (borderline protocol, $\rho{=}0.5$, $3$ seeds, $n{=}30$ targets). Every row uses the \emph{identical} candidate support; validity columns are clean-graph CSR / P(flip). starCF: support-restricted non-private optimum on the clean graph. restricted-NaiveDP: privatize the graph first (Gaussian, $\ep{=}7.2$), take the support-restricted argmax on the privatized graph, then verify it on the clean graph (protocol in Ablation (j)). Ours-EM: sampled release at $\ep{=}8$.}
\label{tab:sameops}
\footnotesize
\setlength{\tabcolsep}{3pt}
\begin{tabular}{lcccc}
\toprule
Dataset & starCF & restr.-NaiveDP & Poll. & Ours-EM@8 \\
\midrule
Cora     & $0.500{\pm}.500$ & $\mathbf{0.000{\pm}0.000}$ & 0.733 & $0.485{\pm}.486$ \\
CiteSeer & $0.367{\pm}.482$ & $\mathbf{0.000{\pm}0.000}$ & 0.700 & $0.349{\pm}.459$ \\
\bottomrule
\end{tabular}
\end{table}

\begin{table}[H]
\centering
\caption{Same-population mechanism comparison: every row shares the same target nodes, the same public prior ($\rho{=}0.5$), and the same clean-graph validation. \emph{General} panel: the primary table, $16$ fixed publicly drawn targets $\times$ $3$ seeds. \emph{Borderline} panel: diagnostic, the $10$ lowest-margin targets $\times$ $3$ seeds. CSR: valid counterfactual rate on the clean graph (for Ours-EM, the sampled rate at $\ep{=}8$; non-empty invalid releases count as failures, and the empty-output rate is $0.1$--$0.4\%$). starCF: the support-restricted non-private ceiling---the same-operation-space reference. Spur./Poll: spurious-flip and prediction-pollution rates (diagnostic). Certified: does the quoted $\ep$ certify the complete released object? $\ep{=}7.2$ rows: graph-level Gaussian/Laplace/RR releases of the adjacency; $\ep{=}7.16$: Opacus RDP, $T{=}100$, $\sigma{=}7$. PrivCF-out: heuristic output perturbation---full-release budget $573$--$753$/$256$ (App.~\ref{sec:postmortem}), listed as non-certified. Ours-EM: pure $\ep$-DP (certified arm, Sec.~\ref{sec:setup}). Operation spaces differ (continuous masks vs.\ $\le2$ edges $+\le3$ features); starCF is the same-operation-space non-private anchor.}
\label{tab:compare-full}
\scriptsize
\setlength{\tabcolsep}{4pt}
\begin{tabular}{llcccccc}
\toprule
Dataset & Method & CSR$\uparrow$ & Spur.$\downarrow$ & Poll$\downarrow$ & $\ep$ & Certified? \\
\midrule
\multicolumn{7}{l}{\emph{General population (primary)}} \\
Cora & starCF (support ceiling) & 0.167$\pm$.373 & 0.000 & 0.000 & $\infty$ & --- \\
Cora & CF (no privacy) & 1.000 & 0.000 & 0.000 & $\infty$ & --- \\
Cora & PNS (causal) & 0.958$\pm$.029 & 0.000 & 0.000 & $\infty$ & --- \\
Cora & RCExplainer (rob.) & 0.854$\pm$.029 & 0.000 & 0.000 & $\infty$ & --- \\
Cora & COMBINEX (unif.) & 0.646$\pm$.059 & 0.000 & 0.000 & $\infty$ & --- \\
Cora & NaiveDP (graph-level) & 0.625$\pm$.051 & 0.333$\pm$.106 & 0.812$\pm$.051 & 7.2 & \checkmark (graph) \\
Cora & LapDP (graph-level) & 0.729$\pm$.078 & 0.250$\pm$.102 & 0.812$\pm$.051 & 7.2 & \checkmark (graph) \\
Cora & RR-LDP (graph-level) & 0.979$\pm$.029 & 0.021$\pm$.029 & 0.062$\pm$.051 & 7.2 & \checkmark (graph) \\
Cora & DP-SGD backbone & 0.938$\pm$.088 & 0.750$\pm$.000 & 0.417$\pm$.078 & 7.16 & \checkmark (model) \\
Cora & PrivCF-out (heur.) & 1.000 & 0.000 & 0.000 & 7.2 & \textbf{\texttimes} \\
Cora & PrivCF-SGD & 0.125$\pm$.051 & 0.000 & 0.000 & 7.16 & \checkmark \\
Cora & \textbf{Ours-EM} ($\rho{=}0.5$) & 0.139$\pm$.320 & 0.000 & 0.000 & 8 & \textbf{\checkmark (pure)} \\
\midrule
\multicolumn{7}{l}{\emph{Borderline (diagnostic)}} \\
Cora & starCF (support ceiling) & 0.500$\pm$.500 & 0.000 & 0.000 & $\infty$ & --- \\
Cora & CF (no privacy) & 1.000 & 0.000 & 0.000 & $\infty$ & --- \\
Cora & PNS (causal) & 1.000 & 0.000 & 0.000 & $\infty$ & --- \\
Cora & RCExplainer (rob.) & 1.000 & 0.000 & 0.000 & $\infty$ & --- \\
Cora & COMBINEX (unif.) & 0.800$\pm$.082 & 0.000 & 0.000 & $\infty$ & --- \\
Cora & NaiveDP (graph-level) & 0.667$\pm$.205 & 0.333$\pm$.125 & 0.733$\pm$.047 & 7.2 & \checkmark (graph) \\
Cora & LapDP (graph-level) & 0.700$\pm$.082 & 0.300$\pm$.141 & 0.733$\pm$.047 & 7.2 & \checkmark (graph) \\
Cora & RR-LDP (graph-level) & 0.633$\pm$.125 & 0.367$\pm$.125 & 0.600$\pm$.141 & 7.2 & \checkmark (graph) \\
Cora & DP-SGD backbone & 0.967$\pm$.047 & 0.433$\pm$.047 & 0.767$\pm$.047 & 7.16 & \checkmark (model) \\
Cora & PrivCF-out (heur.) & 1.000 & 0.000 & 0.000 & 7.2 & \textbf{\texttimes} \\
Cora & PrivCF-SGD & 0.500$\pm$.141 & 0.000 & 0.000 & 7.16 & \checkmark \\
Cora & \textbf{Ours-EM} ($\rho{=}0.5$/$0.7$) & 0.484$\pm$.484$/$0.614$\pm$.467 & 0.000 & 0.000 & 8 & \textbf{\checkmark (pure)} \\
\midrule
\multicolumn{7}{l}{\emph{General population (primary)}} \\
CiteSeer & starCF (support ceiling) & 0.146$\pm$.353 & 0.000 & 0.000 & $\infty$ & --- \\
CiteSeer & CF (no privacy) & 0.958$\pm$.029 & 0.000 & 0.000 & $\infty$ & --- \\
CiteSeer & PNS (causal) & 0.958$\pm$.029 & 0.000 & 0.000 & $\infty$ & --- \\
CiteSeer & RCExplainer (rob.) & 0.854$\pm$.059 & 0.000 & 0.000 & $\infty$ & --- \\
CiteSeer & COMBINEX (unif.) & 0.292$\pm$.128 & 0.000 & 0.000 & $\infty$ & --- \\
CiteSeer & NaiveDP (graph-level) & 0.688$\pm$.088 & 0.312$\pm$.051 & 0.917$\pm$.030 & 7.2 & \checkmark (graph) \\
CiteSeer & LapDP (graph-level) & 0.833$\pm$.059 & 0.167$\pm$.059 & 0.917$\pm$.030 & 7.2 & \checkmark (graph) \\
CiteSeer & RR-LDP (graph-level) & 0.875$\pm$.088 & 0.104$\pm$.030 & 0.188$\pm$.051 & 7.2 & \checkmark (graph) \\
CiteSeer & DP-SGD backbone & 1.000 & 0.771$\pm$.030 & 0.458$\pm$.030 & 7.16 & \checkmark (model) \\
CiteSeer & PrivCF-out (heur.) & 0.958$\pm$.029 & 0.000 & 0.000 & 7.2 & \textbf{\texttimes} \\
CiteSeer & PrivCF-SGD & 0.146$\pm$.029 & 0.000 & 0.000 & 7.16 & \checkmark \\
CiteSeer & \textbf{Ours-EM} ($\rho{=}0.5$) & 0.139$\pm$.336 & 0.000 & 0.000 & 8 & \textbf{\checkmark (pure)} \\
\midrule
\multicolumn{7}{l}{\emph{Borderline (diagnostic)}} \\
CiteSeer & starCF (support ceiling) & 0.367$\pm$.482 & 0.000 & 0.000 & $\infty$ & --- \\
CiteSeer & CF (no privacy) & 1.000 & 0.000 & 0.000 & $\infty$ & --- \\
CiteSeer & PNS (causal) & 1.000 & 0.000 & 0.000 & $\infty$ & --- \\
CiteSeer & RCExplainer (rob.) & 1.000 & 0.000 & 0.000 & $\infty$ & --- \\
CiteSeer & COMBINEX (unif.) & 0.433$\pm$.125 & 0.000 & 0.000 & $\infty$ & --- \\
CiteSeer & NaiveDP (graph-level) & 0.733$\pm$.170 & 0.267$\pm$.125 & 0.700$\pm$.082 & 7.2 & \checkmark (graph) \\
CiteSeer & LapDP (graph-level) & 0.767$\pm$.205 & 0.233$\pm$.047 & 0.700$\pm$.082 & 7.2 & \checkmark (graph) \\
CiteSeer & RR-LDP (graph-level) & 0.733$\pm$.047 & 0.267$\pm$.047 & 0.500$\pm$.082 & 7.2 & \checkmark (graph) \\
CiteSeer & DP-SGD backbone & 1.000 & 0.400$\pm$.141 & 0.633$\pm$.094 & 7.16 & \checkmark (model) \\
CiteSeer & PrivCF-out (heur.) & 1.000 & 0.000 & 0.000 & 7.2 & \textbf{\texttimes} \\
CiteSeer & PrivCF-SGD & 0.533$\pm$.125 & 0.000 & 0.000 & 7.16 & \checkmark \\
CiteSeer & \textbf{Ours-EM} ($\rho{=}0.5$/$0.7$) & 0.348$\pm$.458$/$0.534$\pm$.468 & 0.000 & 0.000 & 8 & \textbf{\checkmark (pure)} \\
\bottomrule
\end{tabular}
\end{table}

\begin{figure}[H]
\centering
\includegraphics[width=\columnwidth]{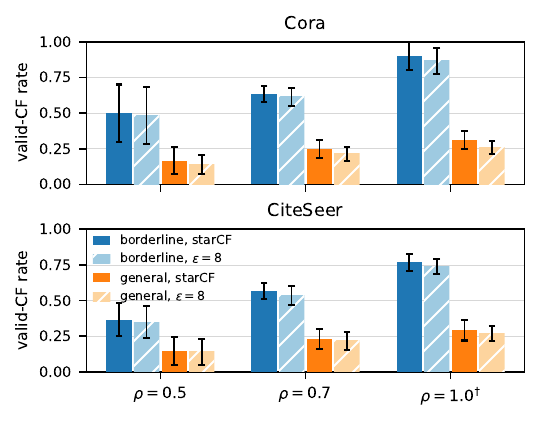}
\caption{Population gating (deployment guidance). Solid bars: starCF, the support-restricted non-private ceiling; hatched bars: valid-CF rate of the sampled release at $\ep{=}8$; whiskers: $\pm1$ std across $3$ seeds; ${\dagger}$: data-dependent upper reference (Sec.~\ref{sec:setup}). The ceiling is a one-number function of the target's margin (borderline ${\approx}0.008$ vs.\ general $0.69$--$0.75$): the owner may choose \emph{who} to explain from public information at no privacy cost, while margin-based gating would itself disclose private margin information and is reported only as a diagnostic.}
\label{fig:gating}
\end{figure}

\end{document}